\documentclass[lettersize,journal]{IEEEtran}
\usepackage{amsmath,amsfonts}
\usepackage{algorithm}
\usepackage{algorithmicx}
\usepackage{algpseudocode}
\usepackage{array}
\usepackage[caption=false,font=normalsize,labelfont=sf,textfont=sf]{subfig}
\usepackage{textcomp}
\usepackage{stfloats}
\usepackage{url}
\usepackage{verbatim}
\usepackage{graphicx}
\def\BibTeX{{\rm B\kern-.05em{\sc i\kern-.025em b}\kern-.08em
    T\kern-.1667em\lower.7ex\hbox{E}\kern-.125emX}}
\usepackage{balance}
\usepackage{paralist}
\usepackage{setspace}
\usepackage{enumitem}
\usepackage{amsthm}
\newtheorem{theorem}{Theorem}
\newtheorem{lemma}{Lemma}
\newtheoremstyle{bolddef}
  {3pt}   %
  {3pt}   %
  {\normalfont}  %
  {}      %
  {\bfseries} %
  {.}     %
  { }     %
  {}      %
\theoremstyle{bolddef}
\newtheorem{definition}{Definition}
\newcommand {\PI}[3]{\pi_{#1,#2}(#3)}
\newcommand{\bel}[3]{b_{#1,#2}(#3)}

\newcommand{\MU}[2]{\mu_{#1}(#2)}

\makeatletter
\def\munderbar#1{\underline{\sbox\tw@{$#1$}\dp\tw@\z@\box\tw@}}
\makeatother

\usepackage{cite} 
\usepackage{tikz}
\usetikzlibrary{arrows.meta, positioning}

\begin{document}
\title{ Structured Reinforcement Learning for \\ Bayesian Persuasion : Application to Intelligent Interactive Driving}

\author{Merlin Paul, \textit{Student Member, IEEE} and Anup Aprem, \textit{Member, IEEE}
}


\maketitle

\begin{abstract} 
Interactive driving, wherein an intelligent lead vehicle equipped with real-time traffic data coordinates route choices of connected vehicles,  offers a promising approach to dynamic traffic management.
To address the challenge of harmonising decisions — the lead vehicle prioritizes global traffic flow, while the connected vehicle seeks the shortest path, this paper considers the strategic information revealing framework of  Bayesian persuasion.  Here,  the principal (lead vehicle) aims to guide the agent’s (connected vehicle) partially observable sequential decision making towards its own objectives by selectively revealing information, such as real-time traffic ahead, using signals. However, the agent’s farsighted response to maximize its long-term reward, accounting for future state transitions and signal disclosures, renders the principal’s signaling strategy design computationally challenging. Moreover, in real-world settings, both the principal and agent may not have complete knowledge of their reward and transition dynamics. We propose an online structured reinforcement learning framework
to synthesize computationally efficient signaling strategy which is \emph{persuasive} for a far-sighted agent. In this context, the main contributions  of the paper are  as follows:
\begin{inparaenum}[(i)]
  \item For a monotonic agent with approximate best response, we  propose MAPL, a structured policy learning algorithm which utilizes the monotonic structure of the agent's policy for faster online learning in large state spaces,
\item Identification of sufficient conditions on the Bayesian persuasion model for the supermodular structure of the Q (action value) function of the principal for a monotonic agent, 
\item Identification of sufficient conditions to ensure the persuasiveness of the principal's signaling strategy for a monotonic agent, 
\item  Supermodular Q learning for Principal (SQP), which leverages the supermodular structure of principal's action value to synthesize computationally efficient  signaling strategy that is persuasive for a monotonic learning agent,
\item Numerical analysis considering a real-time application of Bayesian persuasive driving for lane selection demonstrates that the proposed method is  30\% cost efficient for optimising travelling rewards of both the lead and connected vehicle compared to the existing methodologies for signaling strategy design.
Moreover, online structured learning of connected vehicle's lane choices under Bayesian persuasion outperforms existing approaches for lane selection.
\end{inparaenum}
\end{abstract}
\begin{IEEEkeywords}
Bayesian Persuasion, Sequential decision-making, 
 Structural results, Monotone policies, Supermodularity, Autonomous driving
\end{IEEEkeywords}
\section{Introduction} 
Intelligent traffic routing, which integrates real-time data into vehicle routing recommendations, has emerged as a promising approach to enhance adaptability and responsiveness to dynamic traffic challenges. 
Connected vehicles using an intelligent leader-based architecture offer a state-of-the-art solution for traffic optimization - refer to Fig. \ref{traff_rout} for an example. Here, the lead vehicle equipped with real-time traffic data can coordinate connected vehicles through speed modulation, route guidance, and platoon formation for efficient traffic management \cite{zhang2023cooperative}. This facilitates intelligent transportation networks, where the lead vehicle and the connected vehicle continuously collaborate through real-time interaction and learning. Conventional approaches for interactive driving face the critical challenge of harmonising decisions: — the connected vehicle seeks the shortest path, while the lead vehicle prioritizes global traffic flow and congestion avoidance, requiring coordination in real-time routing.  

 To address this challenge, this paper adopts the selective information disclosure framework of Bayesian persuasion to provide route guidance to individual vehicles while ensuring efficient traffic flow. Bayesian persuasion is a strategic framework in which a principal influences an agent’s actions by selectively revealing information through signals.
 This framework has found applications in several domains involving information asymmetry, including traffic routing \cite {gan2022bayesian}, recommendation systems \cite{mansour2022bayesian},  grading systems in educational settings \cite{boleslavsky2015grading}, queuing \cite{lingenbrink2019optimal}, etc.

In the sequential decision-making framework of Bayesian persuasion for intelligent interactive driving, illustrated in Fig.~1, both principal and agent observe the environmental state, while an additional hidden external state is known only to the principal. In the context of persuasive driving, this hidden state may represent real-time traffic conditions ahead. The principal strategically reveals information about the hidden state through signals—alerts about the traffic ahead or possible route selections—to influence the agent’s actions. A signaling strategy is a probabilistic mapping from the hidden external state and the agent’s state to a distribution over signals. A simple example is a direct revelation signaling strategy; however, directly revealing real-time traffic information may prompt many vehicles to choose the same route, ultimately causing congestion [8]. This underscores the need for carefully designed signaling strategies that guide routing decisions while maintaining efficient traffic flow.
 
Accordingly, the principal controls the flow of information by selectively disclosing these external states, aiming to align the agent's action with its objectives. The agent, on the other hand, faces a sequential decision-making problem and chooses actions based on its preferences.  A short-sighted agent maximizes the expected immediate reward, while a far-sighted agent takes into account the future state transitions and signal disclosures. As a result, the principal must effectively solve a nested planning problem — choosing signaling strategy while anticipating how the agent will respond to it.
In general, the design of signaling strategy while considering a far-sighted agent is NP-Hard \cite{gan2022bayesian}. 
Additionally, in real-world settings, both the principal and agent may
 not have complete knowledge of their reward and transition dynamics. Hence, we propose an online learning framework in which we model a reinforcement learning agent that approximately best responds.
\begin{figure}
\centering
\includegraphics[width=9.5cm,height=5.5cm]{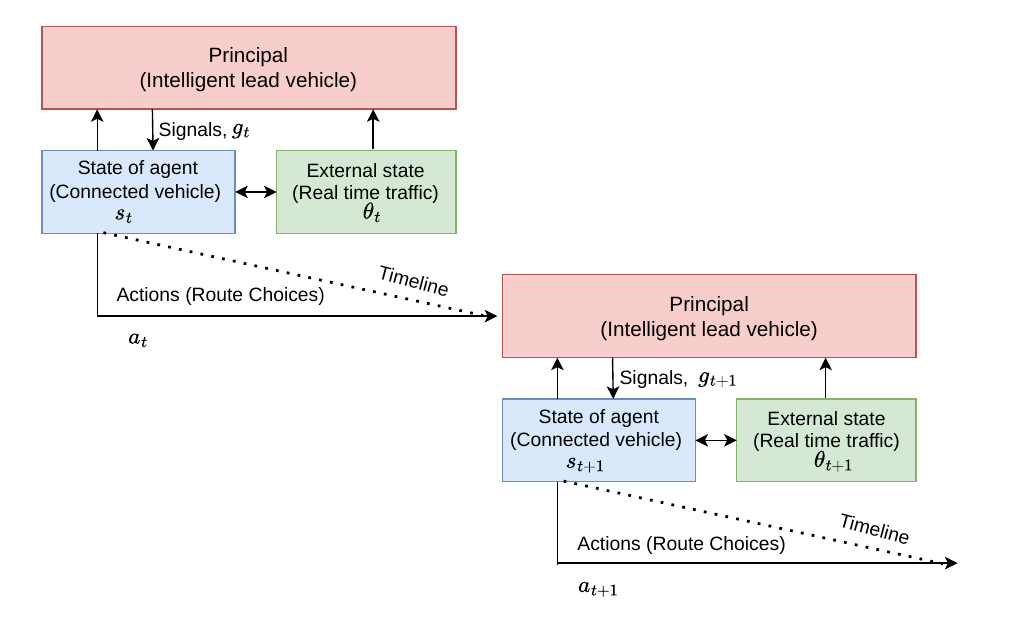}
\caption{ The schematic representation of intelligent interactive driving based on Bayesian persuasion is illustrated in Section~\ref{Prob formln}. Here,
both the principal (intelligent lead vehicle) and the agent (connected vehicle) are aware of the vehicle's state, $s$. An external state, $\theta$ (real-time traffic), known to the principal but unknown to the agent, influences the decision outcome. The principal uses this information to persuade the agent through a signal $g$, guiding actions toward its objectives. Given the signaling strategy, the agent acts on its posterior belief about $\theta$, incurring an immediate reward and inducing a stochastic transition to the next state.
}
 \label{Bayesian persuasion}
\end{figure}

This paper considers a monotonic agent - an agent that responds to `higher' signals by choosing `higher' actions, inducing an ordered and predictable response to signal disclosures. Such behaviour is natural in many practical settings— e.g., transmission control, sensor scheduling, traffic routing — where stronger signals lead agents to better actions \cite{ma2025line}, \cite{paulmonotonic}.
 Many real-world applications—such as credit rating, queuing, traffic routing — where Bayesian persuasion is widely used, often exhibit monotonic threshold policies, such as assigning higher ranks to higher-quality firms \cite{goldstein2018stress} or increasing travelling speed as the road traffic improves \cite{paulmonotonic}. 

We propose an online structured reinforcement learning framework
to synthesize computationally efficient signaling strategy that is
\emph{persuasive} for a far-sighted agent. The  main contributions of the paper are:
\begin{enumerate}
\item \textbf{Monotonic Agent Policy Learning (MAPL)}:    We assume an agent with  monotonic policy that approximately best responds and propose a structured reinforcement learning algorithm that utilizes the monotonic structure of the agent policy. Algorithm 1 (MAPL) enables faster online learning of agent policy in large state spaces while achieving higher average reward.
\item \textbf{Characterisation of principal's Q factors:} Theorem~1 gives sufficient conditions on the  Bayesian persuasion model to ensure supermodular  Q value function of the principal, considering a monotonic agent. 
\item \textbf{Structural results for persuasive signaling strategy:} 
We present sufficient conditions that ensure the persuasiveness of the signaling strategy for a monotonic agent. Specifically, Theorem 2 characterizes the principal’s signaling strategy, while Theorem~3 derives sufficient conditions on the agent’s reward function which ensure that the designed signaling strategy is persuasive for a monotonic agent. 
\item \textbf{Supermodular Q learning for Principal (SQP)}:  We propose Algorithm 2 (SQP), which leverages the supermodular structure of the principal’s Q-value to synthesize computationally efficient signaling strategy which is persuasive for a monotonic agent. Furthermore, Algorithm~3 presents the online reinforcement learning framework for the
repeated interaction of
the principal with  approximately best responding agent in dynamic Bayesian persuasion. 

\item \textbf{Case study of Bayesian persuasive driving }: We illustrate 
\begin{inparaenum}[(i)]   
    \item  the computational efficiency of the  online   learning  of the connected vehicle's policy using MAPL compared to conventional dynamic vehicle routing recommendations,
    \item cost effectiveness as well as computational efficiency of the proposed structured reinforcement learning framework in the design of signaling strategy for the lead vehicle's intelligent interactive driving against existing methodologies \cite{wu2022sequential},\cite{gan2022bayesian}, \cite{bernasconi2022sequential}. 
   \end{inparaenum} 
\end{enumerate}
\emph{To the best of our knowledge, this is the first work that explores
the structural results for online reinforcement learning in dynamic Bayesian
persuasion framework, with applications in intelligent traffic routing, transmission control, credit rating,  sensor scheduling, and queuing.}

\section{Related Work}
Interactive driving approaches that anticipate and adapt to surrounding vehicle's lane-change behaviours have gained increasing attention in intelligent traffic management~\cite{peng2019bayesian}. In this work, we focus on \emph{lane-based routing}, where decisions are made at the granularity of individual lanes, enabled by advances in vehicle-to-vehicle (V2V) communication, real-time traffic sensing, and precize localization. This fine-grained control facilitates coordinated lane assignment and platoon formation, allowing a well-informed lead vehicle with a global view of traffic conditions to guide connected vehicles into high-speed formations. Such coordination improves traffic throughput, enhances energy efficiency through reduced aerodynamic drag, and promotes road safety via synchronized vehicle behavior~\cite{zhang2023cooperative}.

\paragraph{Existing lane-changing approaches}
Classical lane change approaches for autonomous vehicles rely primarily on rule-based decisions that evaluate current traffic conditions against predefined safety constraints, such as minimum gap distance and time-to-collision thresholds \cite{yang2019examining}. However, these methods depend on the instantaneous traffic state rather than anticipating future traffic evolution.  Model Predictive Control (MPC) addresses this limitation by optimising vehicle trajectories over a finite prediction horizon while satisfying the dynamic and safety constraints \cite{zhao2025survey}. However, its performance is highly sensitive to model accuracy and becomes computationally demanding in complex traffic environments, limiting its practicality for real-time deployment. 

\paragraph{Online reinforcement learning approaches}  Deep neural networks have been used to model complex lane-changing behaviour  \cite{wei2021driver}, \cite{zhao2025survey}. However, these methods typically require large-scale training data and substantial computational resources \cite{zhao2025survey}. Furthermore, recent advances have shifted toward online reinforcement learning methods that learn lane choices through repeated environmental interactions \cite{alizadeh2019automated}. However, in multi-vehicle settings, fully shared information can paradoxically lead to traffic congestion \cite{liu2019efficient}. This has motivated the exploration of  strategic information revealing framework, with Bayesian persuasion emerging as a promising approach for intelligent traffic management
\cite{peng2019bayesian}.

\paragraph{Bayesian persuasive driving}Bayesian persuasion provides a strategic framework in which a lead vehicle (the principal) selectively reveals real-time traffic information to influence the lane selections of a connected vehicle (the agent). In the context of Bayesian persuasive driving, it is typically assumed that the principal has complete prior knowledge of the environment, allowing signaling strategies to be precomputed \cite{gan2022bayesian}.
 Moreover, the signaling strategy of the lead vehicle as well as the posterior belief of the connected vehicle are assumed to be  Gaussian distributed for computational tractability. In addition, a simpler agent model is assumed without stochastic state transitions. In this paper, we consider the online reinforcement learning setting, in which the principal as well as the agent are unaware of the reward, the state transition dynamics, and the prior belief over external states, which makes the design of signaling strategy challenging.

\paragraph{Myopic agent assumption} Furthermore, for computational tractability of design of signaling strategy, numerous works restrict attention to myopic agent, which optimizes its immediate rewards without considering future transitions and signaling disclosures. 
For instance, \cite{gan2022bayesian} designs signaling strategies under a fully known model with a myopic agent assumption. More recently, \cite{wu2022sequential} extends this line of work to an online reinforcement setting, where the principal must simultaneously learn the prior distribution, reward function, and transition dynamics. 
However, there is growing interest toward dynamic Bayesian persuasion, in which the principal and agent interact repeatedly over time \cite{gan2022bayesian}, \cite{liu2019efficient}, \cite{ely2017beeps}, which we consider in this paper.

\paragraph{Dynamic Bayesian Persuasion}Determining the optimal signaling strategy for dynamic persuasion, assuming the optimum policy of the agent is generally NP-Hard \cite{gan2022bayesian},\cite{wu2022sequential}. To address farsightedness, under the assumption of model dynamics,  \cite{bernasconi2023persuading} employs history-dependent strategies like promise form schemes, which ensure future reward promises. \cite{bernasconi2025online} explores
online learning framework in which the principal, without any knowledge of the prior belief of the agent about the hidden external state, repeatedly interacts with the far-sighted agent and gradually learns the persuasive signaling strategy. Moreover,  it assumes that all other underlying model parameters are known and the agent possesses perfect recall of memory. This paper considers a far-sighted agent and develops computationally and memory-efficient signaling strategy design as compared to \cite{bernasconi2025online}.

\paragraph{Online Bayesian persuasion for large state spaces}Online learning of signaling strategy in large state, external state, action, and signal spaces is often computationally intractable due to the vast exploration domain. Existing literature on dynamic Bayesian persuasion in continuous spaces considers a simplified model with myopic agent \cite{peng2019bayesian},  linear reward, and transition dynamics \cite{wu2022sequential} assumptions. However, we consider a far-sighted agent under general reward and transition dynamics. 
In the context of dynamic Bayesian persuasion, we explore structural results for faster online learning of both agent's and principal's strategies in large state spaces.

\paragraph{Structural results}Many real world applications naturally satisfy sufficient conditions that guarantee the optimality of monotonic policies \cite{krishnamurthy2018multiple}, \cite{wu2020optimal}, \cite{aprem2013transmit}. In the context of Bayesian persuasion, monotonic signaling—where higher actions are recommended under higher external states—has found applications in diverse areas such as credit rating and traffic routing. In such settings, agents often exhibit threshold policies, such as assigning higher ranks \cite{goldstein2018stress} to higher-quality firms or increasing speed as the road traffic improves \cite{ ngo2009optimality}. Similarly, \cite{djonin2007q} utilizes supermodularity as a linear constraint on Q factors to improve learning. Additionally, \cite{dughmi2016algorithmic} characterizes signaling strategy when the principal's utility function is supermodular.  However, these works restrict the interaction between the principal and agent in one-shot settings. 
     
\paragraph{Novelty of our work} This paper utilizes structural results for online reinforcement learning in dynamic Bayesian persuasion. Our approach provides a joint characterisation of both the agent’s policy and the principal’s value function. Additionally, structured learning ensures faster convergence   \cite{brams2019farsightedness} in high-dimensional or model-free dynamic environments. In particular, the monotonic characterisation of the agent’s policy substantially reduces computational complexity and memory requirements by restricting the search space to policies that satisfy the monotone structure. This enables faster online algorithms for computing the agent’s optimal policy. Furthermore, by exploiting the supermodularity of the principal’s Q-value function, we develop a structured online reinforcement learning framework that synthesizes computationally efficient signaling strategy while ensuring persuasiveness for a monotonic agent.

\subsection*{Outline }
Section \ref{Prob formln} presents the Bayesian persuasion framework and formulates the Bellman dynamic programming equation to compute the optimal policy of the agent and principal's signaling strategy.
Section \ref{Structural Results} presents the main result, Theorem~\ref{Therm main}, which gives the sufficient conditions for the supermodular structure of the Q value function of the principal for a farsighted monotonic agent. Additionally, Theorem \ref{Thm-IC} characterizes the structural results on the principal’s signaling strategy, while Theorem \ref{Theorem agent} derives the sufficient conditions on the agent’s reward function that ensure that the designed signaling strategy is persuasive for a monotonic agent. Section~\ref{Structured  Agent's learning} presents a structured reinforcement learning framework with computationally efficient algorithms, MAPL for structured learning of the agent's monotonic policy, and SQP for supermodular Q learning for the principal's action value, respectively. Section \ref{Numerical results} illustrates the practical applicability of the proposed model assumptions for Bayesian persuasive driving. The numerical results highlight the usefulness of structural results in enhancing computational efficiency in learning the optimal policy of the connected vehicle as well as the signaling strategy of the intelligent lead vehicle. Concluding remarks are offered in Section~\ref{Conclusion}. The Appendix at the end contains supporting lemmas, assumptions, proofs of algorithms, and theorems.

\section{Problem formulation} \label{Prob formln}
  \begin{figure}[htbp]
    \centering  \includegraphics[width=8.5cm,height=6cm]{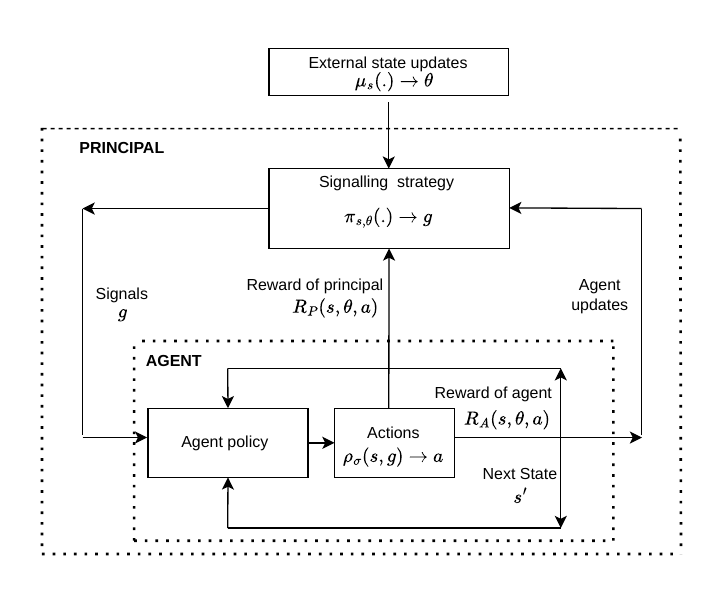}
    \caption{Dynamic framework of Bayesian persuasion in which the principal guides the sequential decision making of the agent by revealing the unknown external states through strategic signaling.}
    \label{MPP}
\end{figure}

 Fig. \ref{MPP} illustrates the dynamic framework of Bayesian persuasion in which the principal guides the sequential decision making of the agent by revealing the unknown external states using a signaling strategy. At each time step, the principal observes the external state of the environment and samples a signal according to the committed signaling strategy. Upon receiving the signal, the agent updates its posterior belief about the unknown state and selects an action by maximizing its expected reward. This accounts for both immediate payoff and state transitions, which are observed by the principal and agent. Subsequently, a new external state of the environment evolves, which is known only to the principal.  This dynamic repeated interaction between the principal and agent continues over time. The goal of the agent is to learn to choose actions that maximize its expected average reward under the committed signaling strategy of the principal. On the other hand, the objective of the principal is to design signaling strategy to maximize its average reward while ensuring that the persuaded learning agent also benefits from the signal disclosures.
\subsection{System Model}
The model can be represented  by the tuple $ M~ = ~\langle \  \mathcal{S},\Theta,\mathcal{G}, \mathcal{A}, P, (\mu_s), (\pi_{s,\theta}), R_A, R_P \  \rangle $
\begin{itemize}
  \item \textbf{State ($s \in \mathcal{S} = \{0,1,2 ..S\}$):} State refers to the known environment of the agent. The state of the agent is also observable by the principal. In the context of traffic routing, it represents measurable factors such as vehicle’s location, velocity, acceleration, or energy consumption~\cite{kamrani2020applying}.
  \item \textbf{External State ($\theta \in \Theta = \{0,1,2..O\}$):} It denotes hidden factors which are not directly known to the agent but observed by the principal, such as road accidents, construction works, which have a significant impact on the agent's decision outcomes.
  \item \textbf{Signal ($g \in \mathcal{G} = \{1, 2, \dots, G\}$):} The signal space denotes the set of signals, which the principal sends to the agent based on the observed state and external state. In the context of traffic routing, signaling may be binary, discrete, or continuous—for example, indicating whether to switch lanes, alerting about upcoming congestion, recommending an appropriate driving speed, or specifying the current traffic condition \cite{peng2019bayesian}.
  \item \textbf{Action ($a \in \mathcal{A} = \{1, 2, \dots, A\}$):} Action space refers to the set of possible actions that the agent can choose based on the signal received. For example, in traffic routing, accelerating, decelerating, selection of different lanes, etc., are taken as actions  \cite{kamrani2020applying}.
  \item \textbf{Prior belief ($\mu(s,\theta) = P(\theta \mid s)$):} It is the conditional probability density  of the external state $\theta$ given the state $s$. 
  It serves as the prior belief distribution of the external state $\theta$ in state $s$, known to the agent before any interaction with the principal, and captures the correlation between state and external state.
  \item \textbf{Signaling strategy ($\PI{s}{\theta}{g} = P(g \mid s, \theta)$):} It defines the conditional probability distribution of signals used by the principal, based on the joint realization of $s$ and $\theta$.  In our model, the signaling strategy is assumed to be Markovian \cite{gan2022bayesian} in which the principal samples a signal $g$ based on the current state $s$ and external state $\theta$.
  \item \textbf{Reward ($R_A(s,\theta, a), R_P(s,\theta, a)$):} It denotes the immediate reward to the agent and principal,  respectively,  determined by the agent’s action as well as state and external state.
  \item \textbf{State transition ($P(s' \mid s, a)$):} It specifies the stationary probability distribution over the next state $s'$,  given  the current state $s$ and the action $a$ taken by the agent.
\end{itemize}
Classical MDP settings involve an agent that either knows or learns the environment through direct interaction, whereas in Bayesian persuasion, the agent’s action is influenced by the principal’s signaling strategy.
\subsection{Agent's average reward policy under Bayesian persuasion}
\paragraph*{Agent's belief over the external state, $\theta$}
The agent does not have direct knowledge of the external state $\theta$. 
Upon observing a signal $g$, while being in state $s$, the agent updates its belief about $\theta$ using Bayes’ rule, resulting in the posterior belief, $\bel{s}{g}{\theta}$, given by
 \begin{align}
\label{eq:belief_update}
\bel{s}{g}{\theta} 
&= P(\theta \mid s, g) 
   = \frac{\PI{s}{\theta}{g} \cdot \MU{s}{\theta}}
          {\sum_{\theta \in \Theta} \PI{s}{\theta}{g} \cdot \MU{s}{\theta} }
\end{align} 
 At each instant $t$, based on the signaling strategy, the principal samples signal $g_t$, depending on the current state $s_t$, and external state $\theta_t$. 
 Since the external state is unobserved by the agent, the expected immediate reward, $\tilde{r}_A$, obtained on taking action, $a_t$, can be computed by taking the expectation with respect to the posterior belief, $b_{s_t,g_t}(\theta_t)$, which is given by  
\begin{align}
\label{imm_reward}  
\tilde{r}_A(s_t,g_t, a_t)=\sum_{\theta \in \Theta} \bel{s_t}{g_t}{\theta_t}\cdot R_A(s_t,\theta_t, a_t) \end{align}
A policy, $\rho: \mathcal{S} \times \mathcal{G} \rightarrow \mathcal{A} $, is the decision rule that determines the action $a$ to be taken at a particular state $s$ on receiving a signal $g$. 
 In this paper, the objective of the agent is to  maximize expected average reward from the starting state $s$ with signal $g$  expressed as 
\begin{align}
\label{agent_exp_cum_cost}
\tilde{R}_A\!= \lim_{t \xrightarrow{}\infty}\frac{1}{t} \mathbb{E} \biggl [\sum_{t=0}^{\infty} \tilde{r}_A(s_t,g_t, a_t) | s_0\!=\!s,g_0\!=\!g \biggl]
\end{align}
Under the ergodic Markov chain assumption of the agent's policy, this limit exists and is independent of the initial state $(s,g)$ \cite{puterman1990markov}. Here, the expectation is taken with respect to state transition dynamics, signaling strategy, and prior belief distribution. 
 Agent's action $a$ in state $s$ results in an expected immediate reward as well as a stochastic transition to a new state ~$s'$. The new external state is given by $\theta'\sim~ \mu_{s'}(.)$.
The principal who is aware of the realisation of $\theta' $, samples signal $g' \sim \pi_{s',\theta'}(.)$ and transmits it to the agent. This  transition, $(s, g)$ to $(s', g')$, happens  with  probability
\begin{align*}
P((s,g),(s',g'),a)=P(s,s',a)\cdot \sum_{\theta' \in \Theta} \MU{s'}{\theta'}\cdot \PI{s'}{\theta'}{g'} 
\end{align*}
$V_A(s,g)$ is the differential or relative value function of the agent at state $s$ with signal $g$.
It denotes the reward obtained relative to the average reward while following policy $\rho$ starting from the state--signal pair $(s,g)$. The corresponding action value function $Q_A(s,g,a)$, represents the expected relative reward of taking action $a$ in $(s,g)$ under policy $\rho$, and can be considered as the sum of the expected immediate reward in state $s$ and the expected value of the subsequent state--signal pair $(s',g')$.  The Bellman equation for the agent's policy  is given by 
\begin{align}
\label{Q_contin_a}
& \tilde{R}_A + Q_A(s,g,a) =  \tilde{r}_A(s, a) +\\ &\sum_{s' \in \mathcal{S}} P(s' \mid s, a) \sum_{g' \in \mathcal{G}} \sum_{\theta' \in \Theta}  \MU{s'} { \theta' } \cdot \PI{s'}{\theta'}{g'} \cdot V_A(s',g') \notag
\end{align}
 Let $Q_A^*, V_A^*, \rho^*, R_A^*$ be the optimal action value, value function, policy, and average reward of the agent.  The Bellman optimality equation can be written as
\begin{align}
\label{Q_contin_b}
 &R_A^* + Q_A^*(s,g,a) =  \tilde{r}_A(s, a) +\\ &\sum_{s' \in \mathcal{S}} P(s' \mid s, a) \sum_{g' \in \mathcal{G}} \sum_{\theta' \in \Theta}  \MU{s'} { \theta' } \cdot \PI{s'}{\theta'}{g'} \cdot\max_{a \in A} Q_A^*(s',g',a) \notag
\end{align}
\begin{align}\label{opt_agent_policy}
&\rho^*(s,g) =\arg \max_{a \in \mathcal{A}} Q_A^*(s,g,a) \\ \notag 
& V_A^*(s,g) =\max_{a \in \mathcal{A}} Q_A^*(s,g,a) \notag
\end{align}
In the next section, we demonstrate how this agent's policy is taken into account in the principal's signaling strategy design.

\subsection{Signaling strategy of the principal for Bayesian persuasion
}\label{signaling strategy of the principal}

Bayesian persuasion differs from the classical MDP in the fact that the actions are  not taken directly by the principal, but by the agent under the influence of the information shared. The goal of the principal is to strategically reveal the unknown external state to guide the agent's action to align with its objectives. In this paper, we consider a Markovian signaling strategy that maps the external hidden state for each of the agent's state to a probability distribution of signals,  i.e., $\pi: \mathcal{S} \times \Theta \rightarrow \Delta \mathcal{G}.$ Moreover, in the context of reinforcement learning, where both the principal and agent learn together, probabilistic recommendations reduce the risk that signals are completely ignored by the agent \cite{lin2023information}. At each step, given the state \(s\) and external state \(\theta\), the principal samples a  signal,  $g \sim \pi_{s,\theta}(.)$.  The objective of the principal is to select a signaling strategy  which maximizes its long run average reward  from a starting state ($s$, $\theta$) while taking into account the agent's  action given by  (\ref{opt_agent_policy}) expressed as
 \begin{align}
\label{prin_exp_cum_reward}
\tilde{R}_P=\lim_{t \xrightarrow{}\infty}\frac{1}{t} \mathbb{E}_\pi \biggl[\sum_{t=0}^{\infty}  R_P(s_t,\theta_t, a_t) | s_0\!=\!s,\theta_0\!=\!\theta \biggl]
\end{align}
  
 Here, the expectation is taken with respect to the agent's policy, state transition dynamics, signaling strategy, and prior belief distribution. Let $V_P(s,\theta)$ be the differential value function of the principal in state $s$ associated with $\theta$, which gives the reward obtained relative to the average reward $\tilde{R}_P$, under signaling strategy $\pi$ starting from $(s,\theta)$. 
 The transition probability from $(s,\theta)$ to $(s',\theta')$ is given by:
\begin{align*}
P((s,\theta),(s',\theta'),a')=P(s,s',a')\cdot \MU{s'}{\theta'}\end{align*} $Q_P(s,\theta,g)$ is  the signal value of the principal in $(s,\theta)$ on sending a signal $g$ under the signaling strategy $\pi.$ It can be expressed as the sum of the principal's immediate reward based on the agent's action prompted by the signal $g$ in $(s,\theta)$ and the expected value on the transition to the future state $(s',\theta')$.

We consider a learning agent that approximately best responds \cite{chen2023persuading}, selecting stochastic actions that are near-optimal with high probability during learning. Let   $\tilde{\rho}(a|s,g)$ be the probability that the learning agent takes an action $a$, on receiving a signal $g.$
The Q value function of the principal with an average reward $\tilde{R}_P$ can be formulated as
\begin{align}
\label{Principal Q}
&\tilde{R}_P + Q_P(s,\theta,g)=\sum_{a' \in \mathcal{A}} \tilde{\rho}(a'|s,g) \cdot R_P(s,\theta,a')+ \\  &\sum_{a'\in \mathcal{A}} \tilde{\rho}(a'|s,g) \cdot \sum_{s' \in \mathcal{S}} P(s,s',a')  \sum_{\theta' \in \Theta}  \mu_{s'}(\theta')  \cdot V_P(s', \theta') \notag \\
&V_P(s,\theta)
=\sum_{g \in \mathcal{G}} Q_P(s,\theta,g) \cdot \pi_{s,\theta}(g) \notag
\end{align}
\subsection{Persuasive signaling : Incentive compatibility of the agent}\label{Persuasive signaling strategy}
The principal sends a signal that selectively discloses the privately observed external state, anticipating that the agent will take the signal into account and act in a way beneficial to the principal. A simple signaling strategy is to reveal the true external state. However, the direct revelation of external state, for example, real-time traffic, may prompt connected vehicles to choose the same route and may finally lead to traffic congestion \cite{liu2019efficient}. Moreover, a key challenge in information design is that such communication can be effective only if the agent finds it profitable to follow the signal. Hence, the signal cannot serve the principal’s interest solely; it must also improve the agent’s expected outcome so that adhering to it is rational from the agent’s perspective. 
\paragraph*{Signaling strategy as action recommendation}
Without loss of generality,  a signaling strategy can restrict the number of signals equal to the receiver’s action set \cite{gan2022bayesian}, i.e., $G_A =\{g_a: a \in \mathcal{A} \}$.  
Let $Q_A(s,\theta,g_a)$ be the action value of the agent in $(s,\theta)$ while recommended by a signal $g_a$, for taking an action $a$, under the signaling strategy $\pi.$
Thus, for the persuaded agent to benefit from strategic signaling of the principal,  its expected average reward for the recommended action has to be greater as compared to other actions, i.e., 
\begin{align}\label{IC}
&Q_A(s,g_a,a)\geq Q_A(s,g_a,a'), \forall a,a' \in \mathcal{A} 
\end{align}
Hence, in order to ensure a persuasive signaling strategy, the incentive compatibility of the agent is taken into account while maximizing the principal's expected average reward. Let $\beta(s,\theta)$ be the probability distribution of the initial state being $(s,\theta)$. The objective of the principal is to maximize the average reward given by (\ref{prin_exp_cum_reward}) subject to the constraints in (\ref{Principal Q})  and (\ref{IC}).
Thus, the optimization of the principal can be formulated \cite{krishnamurthy2016partially} as 
\begin{align}
\label{LP1}
&{\min_{V_P}}\sum_{s \in \mathcal{S}}\sum_{\theta\in \Theta}
\beta(s,\theta) \cdot  V_P(s,\theta)
\end{align}
\textnormal{subject to }
\begin{align*}
&\tilde{R}_P + Q_P(s,\theta,g_a)\geq \sum_{a' \in \mathcal{A}} \tilde{\rho}(a'|s,g_a) \cdot R_P(s,\theta,a')+ \\  &\sum_{a' \in \mathcal{A}} \tilde{\rho}(a'|s,g_a) \cdot \sum_{s' \in \mathcal{S}} P(s,s',a')  \sum_{\theta' \in \Theta}  \mu_{s'}(\theta')  \cdot V_P(s', \theta') \notag 
\end{align*}
\begin{align*}
&Q_A(s,g_a,a)\geq Q_A(s,g_a,a'), \forall a,a' \in \mathcal{A}  
\end{align*}
When the agent is far-sighted, its optimal action does not only depend on the expected immediate reward but also on future state transitions and signal disclosures.  Consequently, the principal’s value function depends on the signaling strategy as well as the induced agent policy, which is the solution of a dynamic optimisation as illustrated in (\ref{Q_contin_b}). This nested dependence makes the principal’s optimization nonlinear, and the design of signaling strategy NP-Hard \cite{gan2022bayesian}.
 Hence, in the next section, we explore the structural results to compute efficiently the optimal agent policy as well as the signaling strategy.
\section{Structural Results}\label{Structural Results}
Here, we investigate how structural results on the principal can be leveraged for the design of a persuasive signaling strategy for a far-sighted agent. We restrict attention to a monotone agent, which enables us to use the Monotone Likelihood Ratio (MLR) ordered signaling strategy, which is \emph{persuasive}, as illustrated in the following subsection.
\subsection{Monotonic agent policy}
 The optimal policy of the agent in a Bayesian persuasion environment often exhibits monotone behaviour with respect to the state and signal space. This is widely observed in various real-world applications, including traffic routing \cite{paulmonotonic}, sensor scheduling \cite{krishnamurthy2016partially}, inventory management~\cite{puterman1990markov}, transmission scheduling \cite{aprem2013transmit}, etc. The monotonic policy can be characterized as  
\begin{align}
    \rho(s,g) \leq \rho(s',g), \   s \leq s', \ \forall  s,s' \in \mathcal{S},\ g \in \mathcal{G}, \\
    \rho(s,g) \leq \rho(s,g'), \   g \leq g', \  \forall s \in \mathcal{S},\ g, g' \in \mathcal{G}. \notag
\end{align}
Subsequently,  in the context of Bayesian persuasion,  assuming monotonic agent policy, we characterize the structure of the principal's Q value function for a persuasive signaling strategy as detailed below.
\subsection{Supermodularity of Q function and Incentive compatible signaling strategy design:}
\label{Structural Q }
In this section, we explore structural properties of principal's Q-functions to ensure that the induced signaling strategy is incentive compatible for a monotonic agent. The main result is Theorem \ref{Therm main}, which gives sufficient conditions for supermodularity of the Q value of the principal with respect to the state and external state. Consider the following assumptions on the Bayesian persuasion model:
\begin{enumerate}
\item[(A1)] $R_P(s,\theta,a)$ is increasing in the state of agent $s$,  external state $\theta$, action $a$ taken by the agent.
\item[(A2)] $R_P(s,\theta, a)$ is supermodular in $(s, a)$ for fixed $\theta$ and $R_P(s,\theta,a)$ is supermodular in $(\theta, a)$ for fixed $s$.
\item[(A3)]  $\mu_{{s_i}}(\theta)  \leq_{r} \mu_{{s_j}}(\theta), \   \forall i < j,$ i.e., prior belief of the external states of higher states MLR  dominates the lower.
\item[(A4)] $P (s_i,., a) \leq_s  P(s_j,., a), \  \forall s \in \mathcal{S}, a \in \mathcal{A}, i \leq j$, i.e., each state of the transition probability stochastically dominates the previous state. 
\item[(A5)] $P(s,s',a+1)$ is tail sum supermodular in $s,a$ in the sense  
$\sum_ {\bar{s}}^S ( P(s,s',a+1)- P(s,s',a))$    is increasing in~ $s, \ \forall  \  \bar{s} \in \mathcal{S}$.
\item[(A6)] The learning agent's policy, $ \tilde{\rho}(a|s,g)$ stochastically 
dominates the previous state and signal spaces and is tail sum supermodular in $s,g$.
\end{enumerate}
\begin{theorem}\label{Therm main}
Assuming that the  Bayesian persuasion model satisfies (A1) to (A6), then there exists an optimal principal's action value function which is supermodular in state and external space with respect to signals. 
\end{theorem}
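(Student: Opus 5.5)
We argue by relative value iteration on the principal's Bellman equation (\ref{Principal Q}) and show that the relevant structural properties are preserved at every iteration. Since the limit of a sequence of supermodular (and monotone) functions is again supermodular (and monotone), the fixed point inherits them.

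\textbf{Setup of the iteration.} Define the iterates
\begin{align*}
Q_P^{(k+1)}(s,\theta,g) = \sum_{a'} \tilde{\rho}(a'|s,g)\Big[R_P(s,\theta,a') + \sum_{s'} P(s,s',a') \sum_{\theta'} \mu_{s'}(\theta') V_P^{(k)}(s',\theta')\Big],
\end{align*}
with $V_P^{(k)}(s,\theta)=\sum_g \pi_{s,\theta}(g) Q_P^{(k)}(s,\theta,g)$. We start from $Q_P^{(0)}\equiv 0$ and subtract a reference value at each step. Under ergodicity, $Q_P^{(k)}$ converges to a solution of (\ref{Principal Q}).

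\textbf{Inductive hypothesis.} The hypothesis has two parts:
\begin{enumerate}[label=(\roman*)]
\item $V_P^{(k)}(s,\theta)$ is increasing in $s$ and in $\theta$;
\item $Q_P^{(k)}$ is supermodular in $(s,g)$ and in $(\theta,g)$, i.e.\ it has increasing differences between the state/external state and the signal.
\end{enumerate}

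\textbf{Monotonicity of the continuation term.} Set $W^{(k)}(s')=\sum_{\theta'}\mu_{s'}(\theta')V_P^{(k)}(s',\theta')$.
\begin{itemize}
\item By (A3), MLR dominance implies first-order dominance of $\mu_{s'}$ in $s'$. Combined with $V_P^{(k)}$ increasing in both arguments, this shows $W^{(k)}$ is increasing in $s'$.
\item By (A4), $\sum_{s'}P(s,s',a)W^{(k)}(s')$ is increasing in $s$.
\item By (A5), writing the sum via tail sums $\sum_{\bar s}(W(\bar s)-W(\bar s-1))\sum_{s'\ge \bar s}P(s,s',a)$ with nonnegative increments, the expectation has increasing differences in $(s,a)$.
\end{itemize}

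\textbf{Immediate reward and the outer sum.} By (A1)--(A2), $R_P$ is increasing and supermodular in $(s,a)$ and in $(\theta,a)$. So the bracket $h(s,\theta,a')$ is increasing in $a'$ and supermodular in $(s,a')$ and $(\theta,a')$. We then take the outer expectation over $\tilde\rho(\cdot|s,g)$:
\begin{itemize}
\item \textbf{In $(\theta,g)$.} Only $\tilde\rho$ depends on $g$, and it is stochastically increasing in $g$ by (A6). The difference $h(s,\theta_2,a)-h(s,\theta_1,a)$ is increasing in $a$. Hence $\sum_a \tilde\rho(a|s,g)[h(s,\theta_2,a)-h(s,\theta_1,a)]$ is increasing in $g$, which gives supermodularity in $(\theta,g)$.
\item \textbf{In $(s,g)$.} Both $h$ and $\tilde\rho$ depend on $s$. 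We decompose the cross difference as
\begin{align*}
\Delta_s\Delta_g \sum_a\tilde\rho h = \sum_a \Delta_g\tilde\rho(a|s_2,\cdot)\,\Delta_s h(a) + \sum_a \Delta_s\Delta_g\tilde\rho(a)\, h(s_1,a).
\end{align*}
The first term is nonnegative because $\Delta_s h$ is increasing in $a$ and $\tilde\rho$ is stochastically increasing in $g$. For the second term we apply summation by parts in $a$: since $h$ is increasing in $a$ and $\tilde\rho$ is tail-sum supermodular in $(s,g)$ by (A6), it is nonnegative.
\end{itemize}
This establishes part (ii) for $Q_P^{(k+1)}$. Part (i) follows because each term is increasing in $s$ and $\theta$ and $\tilde\rho$ shifts mass to higher actions as $s$ grows. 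Averaging over $\pi_{s,\theta}$ to obtain $V_P^{(k+1)}$ needs monotonicity of $Q_P^{(k+1)}$ in $s,\theta$ for each $g$, which we have. If $\pi_{s,\theta}$ itself varies with $(s,\theta)$, we would additionally use that an optimal $\pi$ places mass on the maximizing signal, so that $V_P=\max_g Q_P$ is increasing.

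\textbf{Main obstacle.} The step I expect to be delicate is closing the induction for $V_P^{(k+1)}$. The averaging over a general (non-optimal) signaling strategy $\pi$ can break monotonicity. I would try first to restrict to the optimal $\pi$, where $V_P=\max_g Q_P$ and monotonicity is preserved under the pointwise maximum. Only if needed would I impose MLR-ordering of $\pi$ in $\theta$, anticipating the Theorem~\ref{Thm-IC} characterisation. Finally, we pass to the limit $k\to\infty$ using convergence of relative value iteration under ergodicity; supermodularity is a closed condition preserved under pointwise limits.
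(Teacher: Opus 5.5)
Your overall route is the same as the paper's: relative value iteration, with an induction carrying ``$V_P^{(k)}$ increasing in $(s,\theta)$'' and ``$Q_P^{(k)}$ supermodular in $(s,g)$ and $(\theta,g)$''. Like the paper, you use (A3)--(A4) for monotonicity of the continuation, (A5) through tail sums, and (A6) for the average over $\tilde\rho$. Your explicit decomposition of the $(s,g)$ cross-difference is more careful than the paper's one-line appeal to (A6) and Theorem~\ref{Theorem 3}.

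There is, however, a genuine gap at the sentence ``so the bracket $h(s,\theta,a')$ is increasing in $a'$''. (A1) gives this for $R_P$ only. For the continuation $C^{(k)}(s,a)=\sum_{s'}P(s,s',a)W^{(k)}(s')$ you need $P(s,\cdot,a)$ to be first-order stochastically increasing in $a$, and none of (A1)--(A6) provides that. (A4) orders $P$ in $s$ only, and (A5) controls how the tail-sum increment in $a$ varies with $s$, not its sign. The property is load-bearing in two places:
\begin{itemize}
\item the second term of your decomposition, which is summation by parts of $\Delta_s\Delta_g\tilde\rho$ against $h(s_1,\cdot)$;
\item part (i), since ``$\tilde\rho$ shifts mass to higher actions as $s$ grows'' only helps if $h$ increases in $a$. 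Without (i), the induction does not close.
\end{itemize}
Only the $(\theta,g)$ half survives as written, because $C^{(k)}$ does not depend on $\theta$. The paper's proof uses the same step without justification, so it offers no way around the gap. It does so when it averages the $(s,a)$-supermodular continuation over $\tilde\rho$ to get supermodularity in $(s,g)$, and again when it declares $Q_{P_k}$ increasing in $s$.

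In fact the step cannot be repaired from (A1)--(A6), because the statement is false without an extra hypothesis. Take the following example.
\begin{itemize}
\item $\mathcal S=\{0,1\}$, $\Theta$ a singleton, $\mathcal G=\mathcal A=\{1,2\}$.
\item $R_P(s,\theta,a)=10s+a$.
\item $P(s,1,1)=0.9$ and $P(s,1,2)=0.1$ for both $s$.
\item $\tilde\rho$ chooses $a=2$ exactly when $(s,g)=(1,2)$.
\end{itemize}
All of (A1)--(A6) hold, with equality in (A2), (A4), (A5); small perturbations make every inequality strict. Now fix any $\pi$ and write $\Delta=V_P(1,\theta)-V_P(0,\theta)$ and $p=\pi_{1,\theta}(2)$.
\begin{itemize}
\item $Q_P(0,\theta,\cdot)$ is constant in $g$, so the $(s,g)$ cross-difference equals $Q_P(1,\theta,2)-Q_P(1,\theta,1)=1-0.8\Delta$.
\item $\Delta=10+p(1-0.8\Delta)$, so $\Delta\ge 11/1.8$.
\end{itemize}
Hence the cross-difference is negative. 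Relative values are unique up to an additive constant in this unichain example, so no solution of (\ref{Principal Q}) is supermodular. In your own iteration, $Q_P^{(2)}$ already fails.

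The natural fix is to add the hypothesis $P(s,\cdot,a)\le_s P(s,\cdot,a+1)$, which the lane model of Fig.~\ref{fig:transition_diagram} satisfies. With $W^{(k)}$ increasing, Theorem~\ref{Theorem 2} then makes $C^{(k)}(s,\cdot)$ increasing, and your argument goes through. Either closure of the $V$-induction then works:
\begin{itemize}
\item your $V_P=\max_g Q_P$, valid for the greedy principal of the SQP update but not for the IC-constrained problem (\ref{LP1});
\item the paper's MLR-ordered $\pi$, which additionally needs $Q_P$ increasing in $g$ (again from $h$ increasing in $a$). It also needs $\pi$ ordered in $s$, not only in $\theta$ as in your fallback.
\end{itemize}
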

The discussion of the assumptions is given in the Appendix~\ref{Assumptions}. 
In the following section, we investigate how the signaling strategy of the principal can be made incentive compatible for a monotonic agent and state it as follows:
\begin{theorem}\label{Thm-IC}
 When the principal commits to an MLR-ordered signaling strategy, it ensures monotone, incentive-compatible agent behaviour.
\end{theorem}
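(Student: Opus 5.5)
The plan is to formalise ``MLR-ordered signaling strategy'' as the requirement that, for each state $s$, the likelihood ratio $\PI{s}{\theta}{g'}/\PI{s}{\theta}{g}$ is increasing in $\theta$ whenever $g' \geq g$. Equivalently, $\PI{s}{\theta}{g}$ is totally positive of order two (TP2) in $(\theta,g)$. The argument then has three steps: transfer this ordering to the agent's posteriors, transfer it from posteriors to the agent's Q-values, and conclude monotonicity and incentive compatibility of the induced policy.

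\textbf{Posteriors.} By the Bayes update (\ref{eq:belief_update}), $\bel{s}{g}{\theta}\propto \PI{s}{\theta}{g}\MU{s}{\theta}$. Because the prior factor $\MU{s}{\theta}$ does not depend on $g$, it cancels in the ratio $\bel{s}{g'}{\theta}/\bel{s}{g}{\theta}$. That ratio is therefore a positive constant times $\PI{s}{\theta}{g'}/\PI{s}{\theta}{g}$, so it is increasing in $\theta$, and $b_{s,g}\leq_r b_{s,g'}$ for $g\leq g'$. Two further orderings are needed:
\begin{itemize}
\item Across states, (A3) gives $\mu_{s}\leq_r\mu_{s'}$. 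Combining it with the TP2 property requires $\PI{s}{\theta}{g}$ to be MLR-ordered jointly in $s$ as well. I will impose this explicitly as part of the definition of an MLR-ordered strategy.
\item MLR dominance implies first-order stochastic dominance. Hence the posterior expectation of any function increasing in $\theta$ is increasing in $g$.
\end{itemize}

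\textbf{Q-values.} I want $Q_A(s,g,a)$ to be supermodular in $(g,a)$, and in $(s,a)$, so that $\arg\max_a Q_A$ is increasing; this is Topkis' theorem, as used in \cite{krishnamurthy2016partially}. I will decompose $Q_A$ via (\ref{Q_contin_b}).
\begin{itemize}
\item \emph{Immediate term.} $\tilde r_A(s,g,a)=\sum_\theta \bel{s}{g}{\theta}R_A(s,\theta,a)$. Assume $R_A(s,\theta,a+1)-R_A(s,\theta,a)$ is increasing in $\theta$, the agent-side analogue of (A2); this is exactly the condition Theorem~\ref{Theorem agent} will supply. First-order dominance of $b_{s,g}$ in $g$ then makes $\tilde r_A$ supermodular in $(g,a)$.
\item \emph{Continuation term.} The signal $g$ enters only through $\tilde r_A$; the continuation depends on $(s,a)$ alone. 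So it contributes nothing to the $(g,a)$ cross-difference.
\item \emph{State direction.} For $(s,a)$, I will run value iteration (relative value iteration for the average-reward criterion) and show by induction that $\max_a Q_A$ is increasing in $s'$. Then (A4)--(A5) make the continuation term supermodular in $(s,a)$, by the same tail-sum argument used in the proof of Theorem~\ref{Therm main}.
\end{itemize}
Passing to the limit preserves these lattice properties, so the optimal policy $\rho^*(s,g)$ is increasing in both $s$ and $g$, which is the monotone behaviour claimed.

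\textbf{Incentive compatibility.} With $\mathcal G=\{g_a\}$ ordered like $\mathcal A$, I must show (\ref{IC}): recommending $g_a$ makes $a$ optimal. Here I will use the principal's side together with monotonicity:
\begin{itemize}
\item By Theorem~\ref{Therm main}, the principal's optimal strategy puts higher recommendations on higher $\theta$. This is what the MLR ordering encodes.
\item The monotone agent's best response to $g_a$ is then a threshold action. Single crossing of $Q_A(s,g_a,\cdot)$ in $g_a$ makes the set of posteriors on which $a$ is optimal an MLR interval.
\item Choose the principal's thresholds to coincide with these agent intervals, e.g.\ by restricting the LP (\ref{LP1}) to MLR-ordered $\pi$. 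Then each $b_{s,g_a}$ lies in the interval where $a$ is optimal.
\item By monotonicity, it suffices to verify the adjacent constraints $a'=a\pm1$; the remaining constraints follow from single crossing.
\end{itemize}

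The main obstacle is this last step. MLR ordering alone yields monotone responses but not exact obedience. I expect to need the interval-alignment argument, or to interpret persuasiveness as holding for the posteriors induced by MLR-ordered strategies satisfying the local (adjacent) constraints.
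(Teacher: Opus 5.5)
Your posterior and Q-value steps follow the paper's Steps 1--4: cancel $\MU{s}{\theta}$ in the Bayes ratio, show $Q_A$ is supermodular in $(g,a)$ and $(s,a)$, then apply Topkis. You are more careful than the paper in three places. You note that $g$ enters (\ref{Q_contin_b}) only through $\tilde r_A$. You make explicit the joint MLR ordering in $s$, which Lemma~\ref{lemma:monotone_sigma} uses silently. You bring in (A5), which the $(s,a)$ continuation term needs but Theorem~3's hypothesis list omits.

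The gap is your last step, as you suspect. The paper's Step~5 adds nothing beyond monotonicity. It identifies the recommendation with the monotone selection $a^*(s,g)\in\arg\max_a Q_A(s,g,a)$ and records $Q_A(s,g,a^*(s,g))\ge Q_A(s,g,a')$, which holds by definition of the argmax. Under that reading your proof is finished after the Q-value step, and your third section is superfluous.

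Under the literal reading, namely (\ref{IC}) for the committed $\pi$, the claim is false, so no completion exists. A $\theta$-independent $\pi$ is (weakly) MLR-ordered, and every signal then induces the prior. Hence $Q_A(s,g,\cdot)$ does not depend on $g$, the agent plays the same action under every recommendation, and (\ref{IC}) fails at every $g_a$ whose $a$ is not prior-optimal. By continuity, strictly MLR perturbations fail in the same way when the prior-optimal action is unique.

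Your alignment plan also cannot be executed as written, for three reasons. For a far-sighted agent the posterior region where $a$ is optimal moves with $\pi$ through $\tilde V_A$, so matching thresholds is a fixed-point problem. (\ref{LP1}) is not linear in $\pi$. Citing Theorem~1 is circular, since its proof assumes $\pi_k$ is MLR.

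The idea that makes alignment rigorous is to \emph{pool} rather than re-optimise.
\begin{enumerate}
\item Let $I_a(s)=\{g: a^*(s,g)=a\}$; these are consecutive intervals by monotonicity. Set $\pi'_{s,\theta}(g_a)=\sum_{g\in I_a(s)}\PI{s}{\theta}{g}$.
\item Define $V'_A(s,g_a)$ as the $\sigma'_s$-weighted average of $V_A(s,g)$ over $I_a(s)$. Then $\tilde V'_A=\tilde V_A$.
\item The posterior under $g_a$ is the same mixture of the $b_{s,g}$, $\tilde r_A$ is linear in the posterior, and the continuation term in (\ref{Q_contin_b}) does not depend on the current signal. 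So $\tilde r'_A(s,g_a,\tilde a)+\sum_{s'}P(s'\mid s,\tilde a)\tilde V'_A(s')$ equals the same weighted average of $R_A^*+Q_A(s,g,\tilde a)$.
\item That average is maximised at $\tilde a=a$, with value $R_A^*+V'_A(s,g_a)$. Hence $(R_A^*,V'_A)$ solves the optimality equation under $\pi'$ with obedience attaining the max, and obedience is average-reward optimal.
\item Since $\mathbf{1}\{g\in I_a(s)\}$ is TP2 in $(g,a)$, the composed strategy $\pi'$ is again MLR-ordered. The joint law of $(s,\theta,a)$, and hence both players' payoffs, is unchanged.
\end{enumerate}
This proves the revelation-principle version of the theorem, which is the most it can mean beyond monotonicity. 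Your reduction to adjacent constraints is correct: chain the differences using increasing differences in $(g,a)$ and the MLR-ordered posteriors. It is not needed once pooling is used.
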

MLR signaling results in the agent's monotone belief about external states. Since the agent updates posterior belief of unknown external state,  based on the signal transmitted by the principal using Bayes' rule—and MLR is preserved under Bayesian updating—the principal must commit to an MLR-ordered signaling strategy to guarantee monotone, incentive-compatible agent behaviour. This is proved in detail in the Appendix \ref{app:Monotone best response}.
Next, we illustrate how the supermodular structure of the principal's Q value guarantees monotone, incentive-compatible agent behaviour in the following lemma.
\begin{lemma}
Softmax strategy over the supermodular Q values of the principal ensures MLR ordered signaling strategy.
\end{lemma}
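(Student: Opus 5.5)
The plan is to write the softmax strategy explicitly as
\[
\pi_{s,\theta}(g)=\frac{\exp\bigl(Q_P(s,\theta,g)/\tau\bigr)}{\sum_{g''\in\mathcal{G}}\exp\bigl(Q_P(s,\theta,g'')/\tau\bigr)},\qquad \tau>0,
\]
and show that, for a fixed state $s$, the family $\{\pi_{s,\theta}(\cdot)\}_{\theta\in\Theta}$ is MLR ordered in $\theta$. That is, $\pi_{s,\theta}(\cdot)\leq_r\pi_{s,\bar\theta}(\cdot)$ for $\theta\leq\bar\theta$, which is the same as
\[
\pi_{s,\theta}(g')\,\pi_{s,\bar\theta}(g)\leq\pi_{s,\theta}(g)\,\pi_{s,\bar\theta}(g')\quad\text{for all } g\leq g'.
\]
Equivalently, the likelihood ratio $\pi_{s,\bar\theta}(g)/\pi_{s,\theta}(g)$ should be increasing in $g$. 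Here I take the statement of Theorem~\ref{Therm main} to mean that $Q_P(s,\theta,g)$ has increasing differences in $(\theta,g)$ for fixed $s$, and analogously in $(s,g)$ for fixed $\theta$.

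\textbf{Step 1.} The first key observation is that the normalising denominators cancel in the likelihood ratio, since they depend only on $(s,\theta)$ and $(s,\bar\theta)$ and not on $g$. We therefore get
\[
\frac{\pi_{s,\bar\theta}(g)}{\pi_{s,\theta}(g)}=C(s,\theta,\bar\theta)\,\exp\Bigl(\bigl[Q_P(s,\bar\theta,g)-Q_P(s,\theta,g)\bigr]/\tau\Bigr),
\]
where $C(s,\theta,\bar\theta)>0$ is a constant independent of $g$.

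\textbf{Step 2.} By the supermodularity given in Theorem~\ref{Therm main}, the difference $Q_P(s,\bar\theta,g)-Q_P(s,\theta,g)$ is increasing in $g$ whenever $\bar\theta\geq\theta$. Because $\exp(\cdot/\tau)$ is strictly increasing, the likelihood ratio is increasing in $g$, and this is exactly the MLR dominance above. Put differently, $\log\pi_{s,\theta}(g)$ equals $Q_P/\tau$ plus a term depending only on $(s,\theta)$. Such a term is modular, so it does not affect supermodularity. Hence $\pi_{s,\theta}(g)$ is log-supermodular (TP$_2$) in $(\theta,g)$, which is the standard characterisation of MLR ordering. The same argument in $(s,g)$ gives MLR ordering across agent states.

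\textbf{Step 3.} Finally, I would connect this to Theorem~\ref{Thm-IC}. Since the softmax strategy is MLR ordered, Theorem~\ref{Thm-IC} gives monotone, incentive-compatible agent behaviour. The underlying reason is that Bayes' rule \eqref{eq:belief_update} combined with a TP$_2$ kernel yields posteriors $b_{s,g}$ that are MLR increasing in $g$.

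I expect the main obstacle to be interpretive rather than computational. Theorem~\ref{Therm main} asserts existence of \emph{an} optimal supermodular $Q_P$, and the lemma needs the softmax to be applied to that particular $Q_P$. There are also edge issues to address. First, strict positivity of the softmax guarantees that all ratios are well defined, so the posterior in \eqref{eq:belief_update} has no zero denominators. Second, ties, where increasing differences hold only weakly, still give weak MLR, which is what $\leq_r$ requires. I would state both points explicitly.
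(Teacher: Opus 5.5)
Your proposal is correct and follows essentially the same route as the paper: write out the softmax form, note that the normalizers $Z(s,\theta)$ cancel, and reduce the MLR condition to the increasing-differences property of $Q_P$ in $(\theta,g)$ and in $(s,g)$ given by Theorem~\ref{Therm main}. The paper states this as the cross-product inequality rather than as monotonicity of the likelihood ratio $\pi_{s,\bar\theta}(g)/\pi_{s,\theta}(g)$, which is only a cosmetic difference, and your Step~3 link to Theorem~\ref{Thm-IC} goes beyond what the lemma needs.
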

We compute the signaling strategy as the softmax policy  considered for  the average reward setting as 
\begin{align}\label{bolt_signaling}
 \pi(g | s,\theta) = \frac{\exp\left( Q(s,\theta,g) / \tau \right)}{\sum_{g' \in \mathcal{G}} \exp\left( Q(s,\theta, g') / \tau \right)}
\end{align}
where $\tau$ is the temperature parameter controlling exploration. It is proved in the Appendix \ref{lem:softmax-mlr} that the softmax strategy preserves MLR.
\subsection{Structural results for agent's policy learning }
In addition to  MLR ordered signaling strategies committed by the principal, (A3) and (A4) satisfied by the Bayesian persuasion model, the following assumptions on the reward structure of the agent are required to ensure a monotone best response of the agent.
 \begin{enumerate}
 \item[(B1)] $R_A(s,\theta,a)$ is increasing in state of agent $s$,  external state~$\theta$.
 \item[(B2)]  $R_A(s,\theta, a)$ is supermodular in $(s, a)$ for fixed $\theta$ and
$R_A(s,\theta,a)$ is supermodular in $(\theta, a)$ for fixed $s$.
\end{enumerate}
Now, we state the existence of a monotone best response of the agent in the following theorem.
\begin{theorem} \label{Theorem agent}
   In a Bayesian persuasion model with an MLR signaling strategy, there exists a sufficient set of conditions (B1), (B2), (A3), (A4)  by which  the following hold simultaneously:

\begin{enumerate}
    \item The agent's action value function, $Q_A$, is supermodular:
    \begin{equation*}
        Q_A(s,g,a) \text{ has increasing differences in } (s,a) \text{ and } (g,a)
    \end{equation*}
    \item There exists a monotone agent policy:
    \begin{align*}
        a^*: S \times G \to A \ \textnormal{such that} \ a^*(s',g') \geq a^*(s,g) \\ \forall   s' \geq s, g' \geq g
    \end{align*}
    \item The agent's best response is monotone, and the signaling strategy is incentive compatible.\\
    For every $s,g$ and every deviation from monotone action, $a' \in \mathcal{A}$,
\[
Q_A(s,g,a^*(s,g)) \ge Q_A(s,g,a').
\]
\end{enumerate}
\end{theorem}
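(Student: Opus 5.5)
The plan is to prove the three claims of Theorem~\ref{Theorem agent} in order. Claim~1 comes from a value-iteration induction on the agent's Bellman optimality equation (\ref{Q_contin_b}). Claim~2 then follows from Topkis' monotonicity theorem. Claim~3 is read off from optimality together with Theorem~\ref{Thm-IC}.

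\textbf{Claim 1.} Write the relative value iteration
\[
Q^{(k+1)}_A(s,g,a)=\tilde r_A(s,g,a)+\sum_{s'}P(s'\mid s,a)\,W^{(k)}(s')-c_k,
\]
where $W^{(k)}(s')=\sum_{g'}\sum_{\theta'}\mu_{s'}(\theta')\pi_{s',\theta'}(g')\max_{a'}Q^{(k)}_A(s',g',a')$ and $c_k$ is the normalising constant. Start from $Q^{(0)}_A\equiv 0$. I will show by induction that $Q^{(k)}_A$ is increasing in $(s,g)$ and has increasing differences in $(s,a)$ and in $(g,a)$. Under the ergodicity assumption, relative value iteration converges to $Q_A^*$. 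Monotonicity and increasing differences are preserved under pointwise limits, so both properties pass to $Q_A^*$.

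For the immediate-reward term $\tilde r_A(s,g,a)=\sum_\theta b_{s,g}(\theta)R_A(s,\theta,a)$:
\begin{itemize}
\item The MLR-ordered signaling strategy, combined with (A3) and the fact that Bayes' rule (\ref{eq:belief_update}) preserves MLR (the argument behind Theorem~\ref{Thm-IC}), gives that $b_{s,g}$ is MLR-increasing, hence first-order stochastically increasing, in $g$ and in $s$.
\item (B1) says $R_A$ is increasing in $\theta$. By (B2), $R_A(s,\theta,a+1)-R_A(s,\theta,a)$ is increasing in $\theta$.
\item Taking expectations under a stochastically larger belief therefore makes $\tilde r_A$ increasing in $g$ and gives increasing differences in $(g,a)$.
\item For $(s,a)$ I combine two effects: supermodularity of $R_A$ in $(s,a)$ from (B2) at fixed $\theta$, and the MLR shift of $b_{s,g}$ in $s$ applied to the $\theta$-increasing difference.
\end{itemize}

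For the continuation term, $\max_{a'}$ preserves monotonicity, so $\max_{a'}Q^{(k)}_A(s',\cdot,\cdot)$ is increasing in $(s',g')$. Averaging over $\theta'\sim\mu_{s'}$ and $g'\sim\pi_{s',\theta'}$, with both $\mu_{s'}$ and $\pi_{s',\theta'}$ MLR-increasing, makes $W^{(k)}(s')$ increasing in $s'$. Then (A4) makes $\sum_{s'}P(s'\mid s,a)W^{(k)}(s')$ increasing in $s$. This term does not depend on $g$, so it cannot break the $(g,a)$ increasing differences.

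\textbf{Main obstacle.} The hard step is increasing differences in $(s,a)$ of the continuation term. This needs $P(\cdot\mid s,a+1)-P(\cdot\mid s,a)$ to shift mass upward more strongly as $s$ grows, i.e.\ the tail-sum supermodularity (A5). The theorem lists only (A3), (A4), (B1), (B2). I will first try to invoke (A5), as Theorem~\ref{Therm main} does, by writing $W^{(k)}$ as a sum of increasing step functions and applying (A5) to each tail sum. If (A5) is to be avoided, the fallback is to make explicit that it is implicitly required, or that the $(s,a)$ part of the claim holds under (A4)--(A5).

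\textbf{Claims 2 and 3.} With $Q_A^*$ supermodular in $(s,a)$ and in $(g,a)$, Topkis' theorem shows that the largest maximiser $a^*(s,g)=\max\arg\max_a Q_A^*(s,g,a)$ is increasing in each of $s$ and $g$. It is therefore increasing on the product order, which is Claim~2. By definition of the argmax, $Q_A(s,g,a^*(s,g))\ge Q_A(s,g,a')$ for every $a'$. Under the action-recommendation relabelling $g=g_a$, which is order-preserving because the signaling strategy is MLR ordered, this is exactly the incentive-compatibility constraint (\ref{IC}). Together with Theorem~\ref{Thm-IC}, this gives Claim~3.
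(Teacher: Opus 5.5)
Your architecture matches the paper's. The paper's proof runs: MLR ordering of the posterior in $g$ (Step 1), a monotone continuation value through the signal marginal (Step 2), supermodularity of $Q_A$ (Step 3), Topkis (Step 4), and incentive compatibility read off from the argmax (Step 5). The paper does not actually prove Step 3; it delegates it to \cite{paulmonotonic}. Your value-iteration induction is exactly what that step omits.

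Your worry about (A5) is a defect of the statement, not of your route. Take one external state, a single uninformative signal, $\mathcal{S}=\{0,1\}$, and rewards $R_A(0,\theta,1)=0$, $R_A(0,\theta,2)=-0.5$, $R_A(1,\theta,1)=1$, $R_A(1,\theta,2)=0.6$. Let $P(1\mid 0,1)=\epsilon$ and $P(1\mid 0,2)=P(1\mid 1,a)=1-\epsilon$. Conditions (B1), (B2), (A3), (A4) all hold. Yet the unique average-reward optimal policy is $a^*(0)=2$, $a^*(1)=1$, and $Q_A^*(s,\cdot,2)-Q_A^*(s,\cdot,1)$ drops from $1-3\epsilon$ at $s=0$ to $-0.4$ at $s=1$. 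So (A5) must be added to the hypotheses, as you suspected.

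\textbf{The gap you did not flag is in your first bullet: $b_{s,g}$ need not be increasing in $s$.} Since $b_{s,g}(\theta)\propto \pi_{s,\theta}(g)\mu_s(\theta)$, (A3) controls only the prior factor. You also need $\pi_{s',\theta}(g)/\pi_{s,\theta}(g)$ to be increasing in $\theta$ for $s'\ge s$, i.e.\ TP2 of $\pi_{\cdot,\cdot}(g)$ in $(s,\theta)$. MLR ordering in $(\theta,g)$ does not give this, even combined with MLR ordering in $(s,g)$ as delivered by the softmax lemma.

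Here is a counterexample. Take binary $s,\theta,g,a$, priors $\mu_0=\mu_1=(0.7,0.3)$, and $\pi_{s,\theta}(2)=p_{s\theta}$ with $p_{00}=0.1$ and $p_{01}=p_{10}=p_{11}=0.9$. This strategy is MLR in both $(\theta,g)$ and $(s,g)$, but $b_{0,2}(1)\approx 0.79$ while $b_{1,2}(1)=0.3$. Now set $R_A(s,\theta,1)=0$, $R_A(s,\theta,2)=2\theta-1$, and let $P(\cdot\mid s,a)$ be uniform. Then (B1), (B2) and (A3)--(A5) all hold and the continuation term is constant. The agent strictly prefers $a=2$ at $(0,2)$ and $a=1$ at $(1,2)$, so claims 1 and 2 fail.

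This example breaks two things your induction needs. It breaks the $(s,a)$ increasing differences of $\tilde r_A$. It also breaks the monotonicity of $\tilde r_A$ in $s$, which you need to keep $W^{(k)}$ increasing before (A4) and (A5) can be applied. You need an explicit extra hypothesis, for example $\pi$ independent of $s$, or TP2 in $(s,\theta)$. With this hypothesis and (A5) added, your induction and the Topkis step go through.

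Finally, Claim 3 as stated is just the argmax inequality. Identifying it with (\ref{IC}) would require $a^*(s,g_a)=a$, which monotonicity of $a^*$ does not give. To get it you would have to merge signals according to the action they induce (revelation principle), not merely relabel them.
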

Subsequently, in the following section, we utilize the structural results for a computationally efficient learning of a monotonic agent policy as well as a signaling strategy in dynamic Bayesian persuasion.

\section{Structured Learning}\label{Structured  Agent's learning}
In this section, we present Algorithm 1 (MAPL), which exploits the monotonic structure for faster learning of the agent policy. Then, Algorithm 2, Supermodular Q learning of Principal (SQP) utilizes the supermodular structure of the principal’s Q-value to enable computationally efficient design of the signaling strategy. Furthermore, Algorithm 3 presents an online reinforcement learning framework for   signaling strategy that remains persuasive for a monotonic learning agent.
\subsection{Structured policy learning of the monotone agent under Bayesian persuasion } \label{Threshold learning}
 Structure-based learning leverages the threshold characteristics of the optimal agent policy \cite{marbach2001simulation}, focusing solely on the set of monotonic policies to approximate the optimal solution. It specifically learns the threshold where the optimal action shifts. The threshold is updated based on the gradient of the expected reward received by the agent.   The average reward defined in (\ref{agent_exp_cum_cost}) can be rewritten based on the evaluation of different policies parameterized by $\sigma$,  as follows: 
\begin{align}
\tilde{R}_A=\lim_{t \to \infty} \frac{1}{t} \mathbb{E}_{\sigma}[\sum _{i=0}^{t}\tilde{r}_{A}(s_{i};\sigma)]
\end{align}
Here, $ s_i$ denotes the state at a particular time, $\tilde{r}_{A}(s_{i};\sigma)$ is the expected immediate reward based on the state, posterior belief of external state and agent's actions under $\rho_{\sigma}$ and the expectation is computed with respect to transition probability, signaling strategy and prior belief distribution.
\begin{lemma}\label{grad_lemma}
The gradient of the average reward of the monotone  agent following a policy parameterized by $\sigma$ is given by
\begin{align}\label{grad_update}
\nabla_\sigma  \tilde{R}_A= \sum_{s \in \mathcal{S}}  d(s; \sigma) & \cdot \nabla_\sigma \tilde{r}_A(s) \  + \\ &\sum_{s \in \mathcal{S}}\sum_{s' \in \mathcal{S}} d(s; \sigma) \nabla_\sigma P(s' \mid s; \sigma)\cdot \tilde{V}_A(s')\notag
\end{align}
where $d(s)$ is the stationary state distribution,\\
$\tilde{V}_A (s')= \sum_{g' \in \mathcal{G}}\sum_{\theta' \in \Theta}\mu_{s'}(\theta')\cdot \PI{s'}{\theta'}{g'} \cdot V_A(s',g')$ 
\end{lemma}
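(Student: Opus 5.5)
This is the average-reward policy gradient theorem, specialised to the agent's chain over $(s,g)$ and then marginalised over signals and external states. I will follow the standard derivation of Marbach and Tsitsiklis \cite{marbach2001simulation}, working with the Bellman equation (\ref{Q_contin_a}) for the parameterized policy $\rho_\sigma$.

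\textbf{Step 1: the state-level Poisson equation.} Under the ergodicity assumption, average (\ref{Q_contin_a}) over the action chosen by $\rho_\sigma$ and over the signal. The signal is drawn through $\mu_s(\theta)\pi_{s,\theta}(g)$, which does not depend on $\sigma$. This gives
\begin{align*}
\tilde R_A + \tilde V_A(s) = \tilde r_A(s;\sigma) + \sum_{s'} P(s'\mid s;\sigma)\,\tilde V_A(s').
\end{align*}
Here $\tilde r_A(s;\sigma)$ and $P(s'\mid s;\sigma)$ are the expected reward and transition at $s$ under $\rho_\sigma$, averaged over $(\theta,g)$. $\tilde V_A$ is the signal-averaged differential value from the statement. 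The key point is that the next-state signal averaging is exactly what makes $\tilde V_A(s')$ the quantity appearing in the equation.

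\textbf{Step 2: differentiate and cancel.} Differentiate the Step 1 identity in $\sigma$:
\begin{align*}
\nabla_\sigma \tilde R_A + \nabla_\sigma \tilde V_A(s) = \nabla_\sigma \tilde r_A(s) + \sum_{s'} \nabla_\sigma P(s'\mid s;\sigma)\tilde V_A(s') + \sum_{s'} P(s'\mid s;\sigma)\nabla_\sigma \tilde V_A(s').
\end{align*}
Multiply by $d(s;\sigma)$ and sum over $s$. Since $\sum_s d(s)=1$, the left side becomes $\nabla_\sigma\tilde R_A$ plus $\sum_s d(s)\nabla_\sigma\tilde V_A(s)$. Stationarity, $\sum_s d(s)P(s'\mid s)=d(s')$, turns the last term on the right into $\sum_{s'} d(s')\nabla_\sigma\tilde V_A(s')$. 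These two $\nabla\tilde V_A$ terms cancel, which yields (\ref{grad_update}).

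\textbf{Main obstacle: differentiability.} A monotone threshold policy is a step function of $\sigma$, so $\tilde r_A(s)$ and $P(s'\mid s;\sigma)$ are not differentiable in the threshold. I would first replace the hard threshold with a smoothed (e.g.\ sigmoid) randomisation of the action near the threshold, as in \cite{marbach2001simulation}. This makes $\rho_\sigma(a\mid s,g)$ differentiable, and hence so are the averaged reward and transitions. I then need $d$, $\tilde R_A$ and $\tilde V_A$ to be differentiable. Ergodicity together with a normalisation such as $\sum_s d(s)\tilde V_A(s)=0$ makes the Poisson equation uniquely solvable, and the implicit function theorem applied to the finite linear system gives the required smoothness. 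With that in place, the computation in Steps 1 and 2 is justified.
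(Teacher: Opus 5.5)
Your proof is correct, but it differentiates a different object than the paper's proof does.

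\textbf{The paper's route.} The paper never differentiates $\tilde V_A$. It differentiates the balance equation $d(s';\sigma)=\sum_s d(s;\sigma)P(s'\mid s;\sigma)$ and contracts it against $\tilde V_A$, giving
$\sum_{s'}\nabla d(s')\tilde V_A(s')-\sum_{s,s'}\nabla d(s)P(s'\mid s)\tilde V_A(s')=\sum_{s,s'}d(s)\nabla P(s'\mid s)\tilde V_A(s')$.
It then writes $\nabla\tilde R_A=\sum_s d(s)\nabla\tilde r_A(s)+\sum_s\nabla d(s)\,\tilde r_A(s)$. The second term is rewritten with the undifferentiated Poisson equation; there, $\sum_s\nabla d(s)=0$ removes the $\tilde R_A$ term, and the identity above finishes the computation.

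\textbf{Your route.} You do the dual computation. You differentiate the Poisson equation and use stationarity, undifferentiated, to cancel the $\nabla\tilde V_A$ terms.

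\textbf{What each buys.}
\begin{itemize}
\item The paper's route needs only $d(\sigma)$ to be differentiable. This follows from ergodicity, since $d$ is the unique solution of a finite linear system with smooth coefficients. It is also visibly insensitive to the additive constant in $\tilde V_A$, because $\sum_{s'}\nabla P(s'\mid s)=0$.
\item Your route needs $\tilde R_A$ and $\tilde V_A$ themselves to be smooth. That is why you must fix a normalization and invoke the implicit function theorem. A reference-state normalization $\tilde V_A(s_0)=0$ would spare you from differentiating $d$ at all.
\item In exchange, your write-up is more self-contained. You derive the state-level Poisson equation from (\ref{Q_contin_a}), whereas the paper simply asserts it. You also make explicit the smoothing of the hard threshold, which the paper only assumes implicitly via $\nabla P_{ij}=P_{ij}L_{ij}$.
\end{itemize}

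Incidentally, the Marbach--Tsitsiklis derivation you say you are following is, as far as I recall, the paper's distribution-differentiation argument, not your value-differentiation one.
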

The proof of the lemma is given in the Appendix. 
The above expression shows that the gradient of the long-term average reward depends on expected immediate reward, how the transition dynamics shift under $\sigma$ (i.e., $\nabla_\sigma P$), and how the states being transitioned are (i.e., $\tilde{V}_A(s')$), weighed by the stationary distribution $d(s)$. This is used in the following section for policy gradient methods for learning agent's policy under Bayesian persuasion.
\subsection{Online learning of an agent's monotonic policy under Bayesian persuasion }
In this section, we introduce an algorithm that updates the parameterized agent policy at each time step.
This approach enables us to decompose the process into a series of incremental updates performed at every time step \cite{marbach2001simulation}.  In the context of Bayesian persuasion, we consider the expected reward of the agent's action based on the transmitted signal, under the principal's committed signaling strategy. Here, we substitute the gradient with a biased estimate derived from simulating a single sample path.  It is known that this bias diminishes asymptotically, resulting in convergence \cite{marbach2001simulation}.  We impose the following assumption regarding the transition probabilities, i.e. for every parameter $\sigma$ and state $i,j$, there exists a bounded function $L_{ij}$ such that, $\nabla P_{ij}(\sigma)=  P_{ij}(\sigma) \cdot L_{ij}(\sigma)$
The gradient in (\ref{grad_update}) can be written  as :
\begin{align*}
 \nabla \tilde{R}_A(\sigma)
= \sum_{n=0} ^{T-1}\nabla \tilde{r}_{A}({s_n};\sigma)+ \sum_{n=1} ^{T-1}(\tilde{V}_{A_{s_n}}(\sigma, \tilde{R}_A)L_{s_{n-1}s_n}(\sigma)) 
\end{align*}
\begin{align*}
\nabla \tilde{R}_A(\sigma)=& \nabla \tilde{r}_A({s_0};\sigma) + \sum_{k=1} ^{T-1} \nabla \tilde{r}_A({s_k};\sigma) +(\tilde{r}_A({s_k};\sigma)- \tilde{R}_A)z_k \notag 
\end{align*}
\begin{align*}
\text{ where}, z_k=\sum_{n=1}^{k} L_{s_{n-1}s_n}(\sigma) \notag  
    =\sum_{n=1}^{k} \frac{\nabla P_{s_{n-1}s_n}(\sigma)}{P_{s_{n-1}s_{n}(\sigma)}}
\end{align*}
The parameters can be updated at each step, $t$, recursively as,
\begin{align}\label{recur}
 z_0 =& 0  \\
   z_{t+1}=& z_t + L_{s_t s_{t+1}}(\sigma) \notag \\ 
    =& z_t + \frac{\nabla P_{s_t s_{t+1}}(\sigma)}{P_{s_t s_{t+1}}} \   \notag \\  
  \sigma_{t+1}= &\sigma_{t} +  (\nabla \tilde{r}_{A}(s_t;\sigma_t)+(\tilde{r}_{A}(s_t;\sigma_t)- \tilde{R}_{A,t})z_t )\notag \\ 
   \tilde{R}_{A,t+1}= &\tilde{R}_{A,t} + \eta (\tilde{r}_A(s_t;\sigma_t) - \tilde{R}_{A,t})\notag 
\end{align}
  When the horizon is substantial, $z_t$ will also grow large before being reset to zero, leading to increased variance in the updates. This is a common challenge encountered with likelihood ratio methods. Therefore, it may be advantageous to incorporate a forgetting factor\cite{marbach2001simulation}. In order to reduce the variance, we introduce a discount factor of $\beta$ for $z_t$, and it becomes, 
  $ z_{t+1}= \beta* z_t + L_{s_t s_{t+1}}(\sigma) \  $
\subsection{Parameterization of agent's monotonic policy}
In this section, we consider the approximation of the optimal agent policy, which is monotonically increasing in the state and signal spaces, using parametric curves.
When there are two actions, the optimal monotonic policy in continuous state and signal spaces can be approximated by a threshold curve characterized by a set of parameters, $\sigma,$ which partitions the region spanned by the state and signal space into two. The optimal policy, which increases with state and signal space, can be characterized by a monotonic threshold curve. The parametric representation of monotonic policy can utilize step, piecewise linear, or sigmoidal approximations. The continuity and differentiability of the sigmoid at all points make it an ideal choice to represent a monotonic curve.
 The sigmoidal approximation of an optimal monotonic policy in state and signal spaces can be given by 
\begin{equation} \label{thres_sig}
 \tilde{s}(\sigma,g) =\frac{\phi_2-\phi_1}{1+e^{-\phi_3+g\cdot \phi_4 }}+\phi_1    
\end{equation} 
where each sigmoid is characterized by a set of parameters given by $\sigma=\{ \phi_1,\phi_2,\phi_3,\phi_4\}$ \cite{saureduced2015simulation}. $\phi_1$, $\phi_2$ represent the floor and ceiling of the sigmoid, respectively. The choice of $\phi_4 $ determines the direction and spread of the sigmoid. The curve is decreasing in the state-signal space if $\phi_4>0$, constant if $\phi_4=0$, and increasing if $\phi_4<0$. The change in curvature of the sigmoid occurs at $[\frac{\phi_1+\phi_2}{2},\frac{\phi_3}{\phi_4}]$. 
The optimal  threshold policy approximation is the solution of \begin{equation}
    \label{cum_sig}
    \sigma^*= \arg \max_\sigma \tilde{R}_A(\rho(\sigma))\end{equation}
where $\tilde{R}_A$ is the expected average reward given in (\ref{agent_exp_cum_cost}).
\begin{algorithm}[htbp]
\caption{Monotonic Agent Policy Learning (MAPL)
}\label{alg:Struct_agent}
\hspace{2em}
\begin{algorithmic}[1]
\Require  Horizon length $N$

    \State  Signaling strategy $\gets \boldsymbol{\pi}$ 
    \For{steps $t=1,2,3....N$} 
        \State Observe the external parameter $\theta_t \gets \MU{s_t}{.}$
        \State Obtain the signal $g_t \gets \PI{s_t} {\theta_t}{.}$
        \State Obtain the action $a_t$ based on $\rho_\sigma(s_t,g_t)$
        \State Compute expected immediate reward $\tilde{r}_{A}(s_t;\sigma_t)$ 
        \State Update the agent's threshold policy parameter
        \begin{align*}
 &z_{t+1}= 
     z_t + \frac{\nabla P_{s_t s_{t+1}}(\sigma)}{P_{s_t s_{t+1}}} \   \notag \\  
  &\sigma_{t+1}= \sigma_{t} +  (\nabla \tilde{r}_{A}(s_t;\sigma_t)+(\tilde{r}_{A}(s_t,\sigma_t)- \tilde{R}_{A,t})z_t) \notag \\ 
   &\tilde{R}_{A,t+1}= \tilde{R}_{A,t} + \eta (\tilde{r}_A(s_t;\sigma_t) - \tilde{R}_{A,t})\notag 
\end{align*}
        
        \State Get the next state $ s'$ given by $P(s_t,.,a_t)$
        \State $s_{t+1} \gets s'$
    \EndFor
\end{algorithmic}
\end{algorithm}

Algorithm \ref{alg:Struct_agent} (MAPL) describes the structured online learning of the agent policy. Given an initial state, depending on the prior belief, an external state is sampled. Based on the signaling strategy, a signal is then sampled for the realised state and external state. The principal transmits the signal and this updates the agent's posterior belief of external state. Based on the expected immediate reward, the agent updates the threshold policy and chooses an action. This process is repeated until the agent's policy converges for the committed signaling strategy of the principal.
In the next subsection, we investigate how the Q structure of the principal can be leveraged for the design of a computationally efficient signaling strategy for a monotonic agent.
\subsection{Supermodular Q learning of principal (SQP)}
Here we discuss how standard relative Q learning \cite{yang2024relative} is modified to handle constraints so that it can adaptively learn a signaling strategy persuasive for a monotonic agent.
We can approximate the Bellman equation for the principal given by (\ref{Principal Q}) for relative Q learning as :
\begin{align*}
 & Q_P(s,\theta,g)= Q_P(s,\theta,g) +  \alpha' [ r_P(s,\theta, g) + \\ 
 & \max_{g'} Q_P(s',\theta', g') -  Q_P(s,\theta, g)   - \max_{g'} Q_P(s_0,\theta_0, g')]
\end{align*}
where $\alpha'$ is the learning rate and $(s_0,\theta_0)$ is the reference state.  The optimal solution can be computed as 
\begin{align}
\label{opt_Q}
\mathbb{E}  \left[ r_P(s,\theta, g) + \max_{g'} Q_P^*(s',\theta', g') )-  Q_P^*(s,\theta, g) \right]=0 
\end{align}
According to Theorem 3, principal's Q-values exhibit supermodularity. 
Imposing this supermodular structure enables efficient  learning while  
bypassing non-optimal strategies \cite{krishnamurthy2016partially}.
The convergence of supermodular Q-learning is well-understood \cite{djonin2007q} and extends naturally to our context.
The supermodularity as a linear inequality constraint on $Q_P $ is :
\begin{small}
\begin{align*}
Q_P(s,\theta,g+1)-Q_P(s,\theta,g) \leq  Q_P(s', \theta, g+1) - Q_P(s', \theta, g) \\
Q_p(s,\theta,g+1)-Q_P(s,\theta,g) \leq  Q_P(s, \theta', g+1) - Q_P(s, \theta', g) \notag
\end{align*}
\end{small}
$\forall s \leq s', \theta \leq \theta'.$ 
The constraint on the supermodularity of Q factors can be given as:
\begin{align}\label{eq_ineq}
 Q_PI \leq 0 
\end{align}
where $ \leq$ is elementwise.
 Let $f(Q_P(s,\theta,g))$ be taken as
 \begin{small}
\begin{align*} 
f(Q_P(s,\theta,g))=   (r_P(s,\theta, g) +  \max_{g'} Q_P(s',\theta', g') )- Q_P(s,\theta, g)  \notag 
\end{align*} 
\end{small}
Also let  $ f(Q_P)=\nabla_{Q_P}h(Q_P)$\\
Then,  by (\ref{opt_Q}), we  have $\mathbb{E}({f(Q_P^*)})=\nabla_Q\mathbb{E}({h(Q_P^*)})=0$.\\ Hence, we consider Q learning as stochastic gradient algorithm \cite{krishnamurthy2016partially} that maximizes an objective 
$h(Q)$ subject to (\ref{eq_ineq}), i.e. 
\begin{align}
Q_P^*= \arg \max_{Q_P}\mathbb{E}(h(Q_P))
\end{align}
Now, we use primal-dual stochastic algorithm \cite{krishnamurthy2016partially} to approximate $Q_P^*$ with $\lambda \geq 0$ as  Lagrange multipliers given by:
\begin{align}\label{con_Q}
Q_{P, t+1}=&Q_{P,t}+  \alpha' [f(Q_{P,t})  - \max_{g'} Q_{P,t}(s_0,\theta_0, g') -  \lambda_t I ] \\
\lambda_{t+1} =& max[\lambda_t + \alpha' Q_{P,t} I, 0] \notag
\end{align}
\begin{algorithm}[htbp]
\caption{Supermodular Q learning of Principal (SQP)}
\label{alg:super_q_learning}
\begin{algorithmic}[1]
\Require Learning agent's best response  policy $\{\tilde\rho_k\}$, Principal's Q value  $Q_{P,k}$
\State Initial state $s_0$
\For{$t = 0, 1, 2, \dots T$}
    \State $\theta_{t} \gets \mu(\cdot \mid s_t)$
    \State $\pi(. | s,\theta) = \frac{\exp\left( Q_P(s,\theta,g) / \tau \right)}{\sum_{g'} \exp\left( Q_P(s,\theta, g') / \tau \right)}$
    \State $g_t \gets \pi(\cdot \mid s_t,\theta_t)$
        \State Choose action $a_t \sim \tilde{\rho_k} (\cdot \mid s_t,g_t)$
        \State Observe next state $s_{t+1}\gets P(\cdot \mid s_t,a_t)$
        \State Observe $R_{P}(s_t, \theta_t, a_t), R_{A}(s_t, \theta_t, a_t)$
        \State Update $Q_{P,t+1}$ and ensure supermodularity   using (\ref{con_Q})
        \EndFor
\end{algorithmic}
\end{algorithm}
 Algorithm 2 (SQP) describes the online learning of the principal's Q value for the agent's best response policy $\tilde\rho$. At each time step, for the given $(s_t,\theta_t)$, a signal $g_t$ is sampled by taking the soft max of the principal's Q values. On receiving the signal, the agent takes an action based on $\tilde{\rho}$ and the rewards of both the principal and agent are observed. The Q values of the principal are then updated using (\ref{con_Q}). This is continued till the end of the episode.
 \subsection{Repeated principal agent interaction}
 \begin{figure}[htbp]
    \centering
   \includegraphics[width=8.5cm,height=3.8cm]{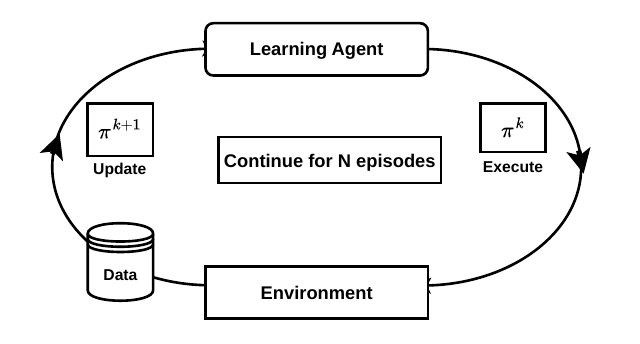}
    \caption{The figure illustrates an online learning framework in which the principal and the agent iteratively adapt their strategies through repeated interactions. At each episode, given the signaling strategy announced by the principal, the agent learns its optimal policy. The principal then observes the agent’s best response and updates the signaling strategy accordingly for the subsequent episode.}
    \label{p_a}
\end{figure}
 In this paper,  we consider the dynamic Bayesian persuasion as illustrated in Fig 3, wherein
the principal repeatedly interacts with an approximately best responding learning agent.  
Let $\tilde{R}_P(\pi,\rho)$ and $\tilde{R}_A(\pi,\rho)$ denote the principal’s and agent’s
long-run expected average rewards, respectively, under the signaling strategy $\pi$ and agent's policy $\rho$. For any fixed signaling strategy $\pi^k$, the agent selects a best response at $k^{th}$ episode by choosing
\[
\rho^k =\arg\max_{\tilde\rho} \tilde{R}_A(\pi^k,\tilde\rho)
\]
Given the induced agent response, the principal updates the signaling strategy
accordingly as
\[
\pi^{k+1} =\arg\max_{\tilde\pi} \tilde{R}_P(\tilde\pi,\rho^k)
\]
Initially, given the principal's signaling strategy $\pi^k$, the agent learn's over an episode and finds its best response, $\rho^k$. Observing the agent's best response, the principal modifies its signaling strategy for the next round, $\pi^{k+1}$, and the repeated iteration continues.
$\pi^*$ is said to be dynamically stable if
$\nabla_\pi \tilde{R}_P(\pi,\rho)=0$ as $\pi^k \to \pi^*$.
\begin{algorithm}[h]
\caption{Online learning of signaling strategy and agent's policy}
\label{alg:joint_learning}
\begin{algorithmic}[1]
\State \textbf{Initialize:} Signaling strategy $\pi_0$, agent policy  $\tilde{\rho_0}$, step sizes $\{\alpha'\}$
\For  {$k = 0,1,2,\ldots N$}
  \State 
    Principal  commits to signaling strategy $\pi_{k}$
\For {$t = 0,1,2,\ldots T$}
    \State Agent policy update using MAPL
    \[
     \sigma_{t+1}= \sigma_{t} +  (\nabla \tilde{r}_{s_t}(\sigma_t)+(\tilde{r}_{s_t}(\sigma_t)- \tilde{R}_{A_t}))z_t \notag 
    \]
\EndFor
\For {$t' = 0,1,2,\ldots T$}
  \State Update  principal's Q value under agent's best \ \quad response using SQP
    \begin{align*}
Q_{P, t'+1}=Q_{P,t'}+  \alpha' [f(Q_{P,t'})- \lambda_{t'} I ] 
\end{align*}
\EndFor
\State Output signaling strategy $\pi$ using (\ref{bolt_signaling})
\EndFor
\end{algorithmic}
\end{algorithm}
\subsection{Structured learning for repeated principal agent interactions}
Algorithm 3  describes the online learning of the signaling strategy and agent's policy. At the start of each round, the sender publicly commits to an incentive-compatible signaling strategy 
$\pi$ for a monotonic agent.
Knowing the signaling strategy, agent learn to select an optimal policy that maximizes expected reward using MAPL which updates the threshold at each step based on the expected reward. After observing the agent’s action for an episode, the principal computes the  average reward and updates its Q value under agent's best response using SQP. Thus, using structured learning, Algorithm 3 learns the agent’s monotonic policy and leverage the supermodular structure of the value function to ensure that the sender’s signaling strategy remains persuasive throughout the repeated interaction.
\section{Numerical Results: Bayesian Persuasive Driving}\label{Numerical results}
\begin{figure}[htbp]
    \centering
    \includegraphics[width=0.5\textwidth]{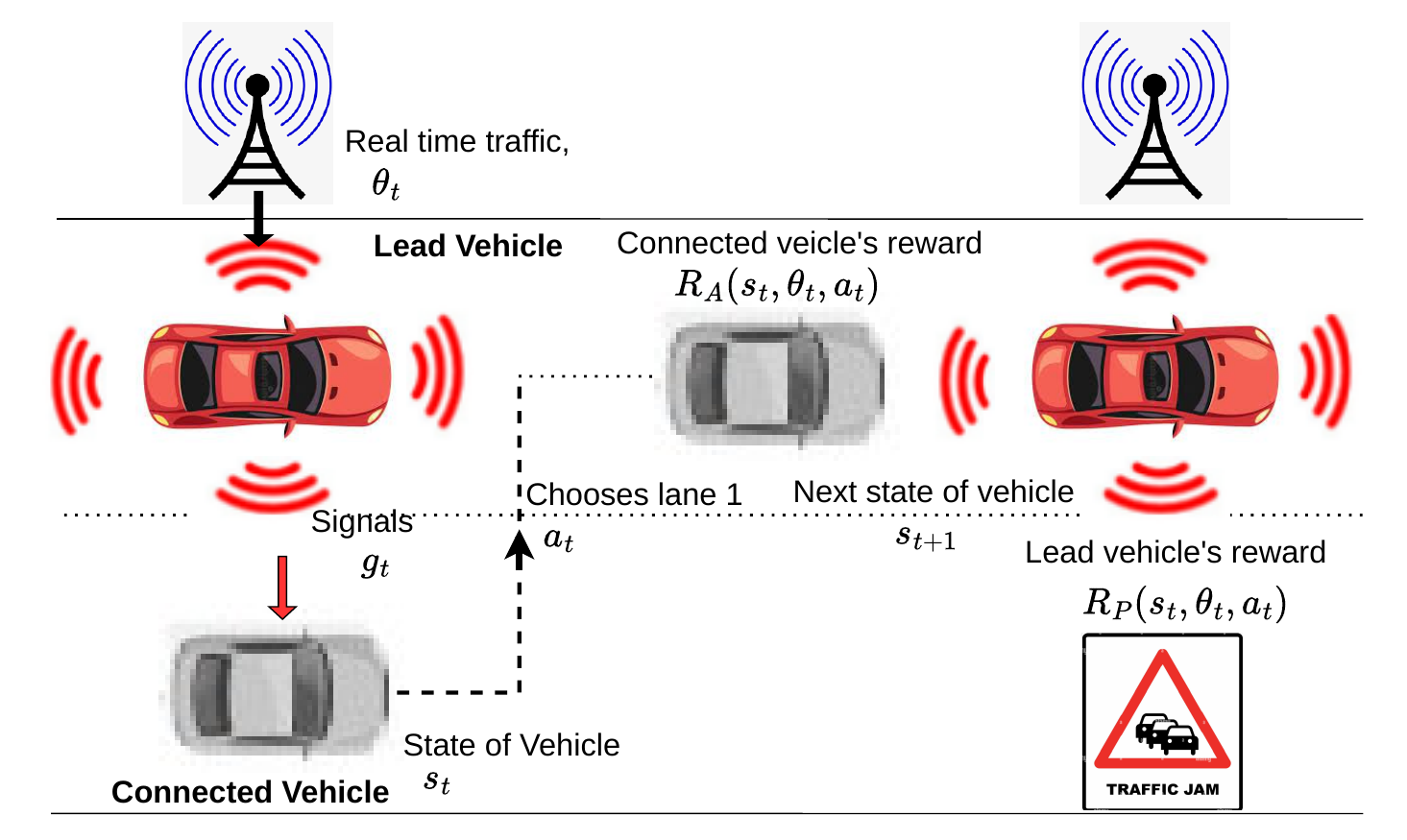}
    \caption{The figure gives an illustration of Bayesian persuasive driving. Here, the red coloured vehicle symbolises the lead vehicle and blue represents the connected vehicle. 
    Road side Units(RSU) transmit traffic data to the lead vehicle.
    The lead vehicle who is aware of the real-time traffic updates guides the connected vehicle through the dark line ensuring congestion-free traffic. 
    }
    \label{traff_rout}
\end{figure}
In this work, we consider lane-based routing—where decisions are made not just at the road level but at the granularity of individual lanes— which is becoming increasingly viable due to advancements in vehicle-to-vehicle (V2V) communication, real-time traffic sensing and precise localization technologies.
\subsection{Real-time traffic routing via Bayesian persuasion}
\label{Real-time traffic routing using Bayesian persuasion in continuous spaces}
We model a dynamic Bayesian persuasion framework for intelligent, lane-based traffic routing involving an intelligent lead vehicle (the principal) and connected vehicle (agent), as depicted in Fig.~\ref{traff_rout}. 
The framework includes:
\begin{enumerate}
    \item \textbf{Intelligent lead vehicle}:  
    The lead vehicle has access to real-time traffic conditions—including accidents and road closures—through GPS, RSUs, and other external sources~\cite{das2017reducing}. Based on this information, it strategically selects signaling strategies  to guide vehicles into specific lanes, aiming to optimize overall traffic flow.
    \item \textbf{Autonomous Connected Vehicle}:  
    Vehicle use onboard sensors to monitor their speed and location, while receiving external updates from the lead vehicle. Based on the observed signal and knowledge of the signaling strategy, it updates the belief about  congestion ahead and makes lane-selection  locally and in real-time.
    \item \textbf{Continuous feedback}: The routing process is adaptive and continuous, with connected vehicles being updated regularly based on real-time traffic conditions. The dynamic change in the state of the connected vehicle is also continuously monitored by the lead vehicle to provide further updates, improving overall efficiency and service quality.
\end{enumerate}
We consider the following model for Bayesian persuasion ~\cite{das2017reducing, peng2019bayesian}.
\begin{itemize}
    \item \emph{State Space}:  
    The vehicle state represents the speed levels of the connected vehicle. Here $s \in \mathcal{S} = \{0,1,2 .,10\}$

    \item \emph{External State Space}: Under typical traffic conditions, vehicles move at free-flow speed, which is the speed attainable when traffic density is low and movement is not affected by surrounding vehicles. However, if many vehicles choose the same route at the same time, their collective demand can surpass the lane’s capacity, causing congestion. In our model, the external state represents the extent of this congestion.
    The external state quantifies traffic congestion based on the traffic  flow model in~\cite{pan2022model}, where:
    \begin{align}
    \theta = \frac{v_d}{v_f} = \left(1 - \left(\frac{d}{d_{\text{jam}}}\right)^p \right)^q
    \end{align}
    Here, \( d \) is traffic density, \( d_{\text{jam}} \) the jam density, \( v_d \) the effective speed at density \( d \), and \( v_f \) the free-flow speed. Here,   increasing levels of congestion-free traffic is represented as $\theta \in \Theta = \{0,1,2..10\}$

    \item \emph{Action Space}:  
   In our traffic routing framework, we analyze a single-link setup that may contain several parallel lanes, each characterized by distinct speed profiles or congestion levels. The available actions correspond to choosing among these lanes. Specifically, in the two-lane case, we define action 1 as selecting the low-speed lane and action 2 as selecting the high-speed lane.
   \item \emph{Agent's travel reward function}: The reward function for choosing lane 1 and lane~2 are  assumed as $R_{A_1}(s,\theta,1)$ as $175 +  s^2 + \theta^2 $ and $R_{A_2}(s,\theta,2)$ as $155 +1.5\cdot s^2 + 1.5 \cdot \theta^2$. The reward structure  ensures that the first lane is better for lower speed and lower traffic-free flow. In addition, the second lane becomes preferable at higher speeds and congestion-free traffic due to its greater rate of reward in comparison with the first.   
  \item \emph{Principal's reward function}: The reward function for choosing lane 1 and lane~2 are  assumed as $R_{P_1}(s,\theta,1)$ to be $R_{A_1}(s,\theta,1)$ and $R_{P_2}(s,\theta,2)$ as $ 500 + R_{A_2}(s,\theta,1)$. This accounts for the extra reward the lead vehicle receives when the connected vehicle chooses the high-speed lane. It satisfies (A1) and (A2).
  \item \emph{Vehicle dynamics}: The transition probability density reflects the likelihood of transitioning to different speeds based on the chosen lane. Low and high-speed ranges are referred to as speed which is less than 50\% of the maximum level possible and the rest respectively. High-speed lane offers higher chances of transitioning to higher speeds than low-speed lanes. The transition probability density captures how likely a vehicle is to shift between different speeds depending on the selected lane, as illustrated in Fig. \ref{fig:transition_diagram}. It reflects the assumption that vehicles traveling at higher speeds tend to maintain those speeds, and similarly for lower speeds. Compared to low-speed lanes, high-speed lanes provide a greater probability of transitioning into and staying within higher speed ranges. The speed transition dynamics of the connected vehicle are consistent with (A4) and (A5).
 \item \emph{Connected vehicle's objective}:
In the proposed framework, the objective of the principal (lead vehicle) is to monitor the speed levels of connected vehicles across different lanes and provide routing signals based on real-time congestion ~\cite{he2015congestion}. Its signaling strategy is designed to guide vehicles predominantly toward high-speed lanes, thereby enhancing overall traffic efficiency while also maximizing the travel reward for the connected vehicle. The agent (connected vehicle) on the other hand aims to determine a policy that selects the optimal lane to maximize its expected average reward favouring higher speeds and minimal congestion.
 \end{itemize}
\begin{figure}[!t]
    \centering
    \begin{tikzpicture}[
        scale=0.9, %
        every node/.style={transform shape},
        state/.style={
            circle, draw, minimum size=0.9cm, %
            font=\scriptsize, align=center
        },
        >=Stealth,
        every node/.append style={font=\scriptsize}
    ]

    \begin{scope}[xshift=-0.7cm]

    \node[state] (L1low) at (0,0) {Low\\speed};
    \node[state] (L1high) at (3.2,0) {High\\speed};
    \node[font=\scriptsize, yshift=-0.6cm] at (1.6,-0.8) {\textbf{LANE 1}};

    \path[->] (L1low) edge [loop left, looseness=8] node[left]{0.9} (L1low);
    \path[->] (L1high) edge [loop right, looseness=8] node[right]{0.2} (L1high);

    \path[->] (L1low) edge[bend left=20] node[above]{0.1} (L1high);
    \path[->] (L1high) edge[bend left=20] node[below]{0.8} (L1low);

    \node[state] (L2low) at (0,-2.4) {Low\\speed};
    \node[state] (L2high) at (3.2,-2.4) {High\\speed};
    \node[font=\scriptsize, yshift=-0.6cm] at (1.6,-3.2) {\textbf{LANE 2}};

    \path[->] (L2low) edge [loop left, looseness=8] node[left]{0.3} (L2low);
    \path[->] (L2high) edge [loop right, looseness=8] node[right]{0.8} (L2high);

    \path[->] (L2low) edge[bend left=20] node[above]{0.7} (L2high);
    \path[->] (L2high) edge[bend left=20] node[below]{0.2} (L2low);

    \end{scope}
    \end{tikzpicture}

    \caption{State transition diagram for the two-lane system.}
    \label{fig:transition_diagram}
\end{figure}
We consider a Bayesian persuasion model with a completely unknown reward and transition dynamics, which fit the model assumed for monotonic agent policies, and consider the agent's prior belief as uniform, which satisfies (A3). The assumed Bayesian persuasive driving context satisfies assumptions for a monotonic agent policy and the principal's supermodular Q function. Hence, we utilize the structural results and demonstrate its advantage in  the efficient online learning of the connected vehicle policy as well as signaling strategy of the lead vehicle with the existing methods in the following.
\subsection{Comparison with existing dynamic vehicle routing algorithms}
Here, we compare the travelling reward of the connected vehicle, which uses the Bayesian persuasion framework for lane selection, employing the MAPL with the existing vehicle routing methodologies. The dynamic vehicle routing algorithms considered are:
\begin{enumerate}
\item Model Predictive Control (MPC): 
Here we use the transition model described in  Fig. \ref{fig:transition_diagram}.
The model is iteratively learned during the learning to simulate the future trajectory over a horizon of 5 \cite{ramezani2012estimation}, 
\cite{prabu2022novel}, \cite{hewing2020learning}. At each decision point, MPC computes the cumulative travel reward over the prediction horizon and chooses the action that minimizes it \cite{zhang2018lane}, \cite{pan2022model}. This is continued till the end of the episode.
\item Stochastic Shortest Path (SSP): This approach utilizes the prior assumption of the agent regarding the congestion of the lane ahead. The transition probabilities capture stochastic speed variations, and then one-step prediction is employed to choose a lane which maximizes the reward of travel \cite{levering2022framework}.
\item Tree-based learning: Tree-based learning approximates the value function in dynamic vehicle routing by using trees to estimate the future reward of different routing decisions based on current states. The tree is trained on samples collected through simulation\cite{ulmer2018budgeting}.
\end{enumerate}
\vspace{-.5em}
\begin{center}
\begin{table}[h]
\centering
\caption{Comparison of the travelling reward of connected vehicle for various vehicle routing algorithms}
\label{Table3}
\scalebox{0.68}{\begin{tabular}
{|p{1.1cm}|p{4.8cm}|p{3cm}|}
\hline
Number& Vehicle routing Algorithm & Travelling reward \\[0.2 cm]
 \hline
3&Model Predictive Control&230\\[0.2 cm]
\hline
3&Stochastic Shortest Path&220\\[0.2 cm]
\hline
4&Tree based learning&230\\[0.2 cm]
\hline
5&Q learning&240\\[0.2 cm]
\hline
6& \textbf{MAPL}&\textbf{257}\\[0.2 cm]
\hline
\end{tabular}}
\end{table}
\end{center}
\begin{figure}[htbp]\label{fig agent learning}
\centering
\includegraphics[width=8cm,height=5.5cm]{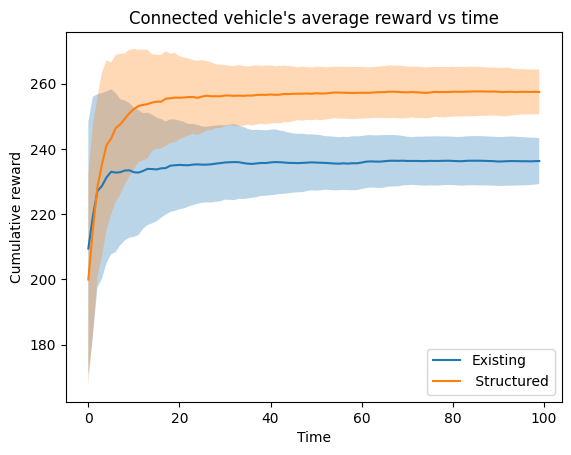}
\caption{Average reward accumulated by the connected vehicle }
\end{figure}
\begin{figure}[htbp]\label{fig principal learning}
\centering
\includegraphics[width=8cm,height=5.5cm]{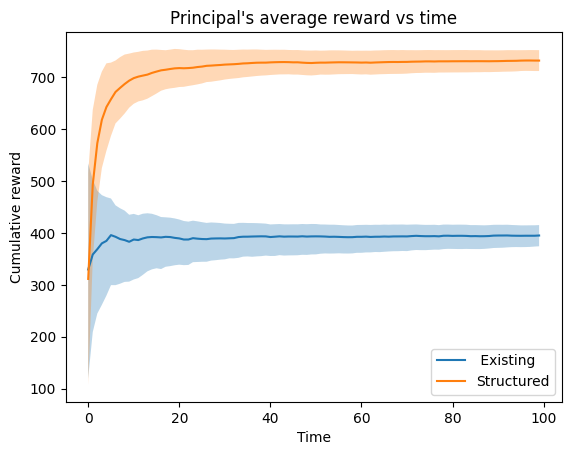}
\caption{Average reward accumulated by the   lead vehicle}
\end{figure}
Here, the proposed structured learning method yields better average rewards considering the farsightedness of the agent, as illustrated in Table 1.  SSP, MPC and Tree based learning obtain lower rewards due to their reliance on prior traffic assumptions without real-time updates. This shows the benefit of utilising the threshold policy of the connected vehicle navigating using the Bayesian persuasion framework for lane selection over the existing dynamic vehicle routing recommendations.
\subsubsection{Comparison with existing methodology for the design of signaling strategy}
In the structured learning, we utilize the monotonic structure of the agent optimal policy and supermodular Q learning of principal for computationally efficient learning of signaling strategy persuasive for connected vehicle. For comparison with existing reinforcement learning framework, the signaling strategy  is evaluated over 100 episodes of 100 steps. We compare with  existing  signaling strategies used in the context of Bayesian persuasion which include
\begin{enumerate}
\item Optimism-Pessimism Principle for Persuasion Process(OP4) \cite{wu2022sequential}: The principal designs signals to optimize  agent's expected immediate  reward, without accounting for future state transitions. Here, pessimism tackles the uncertainty in the prior estimation by selecting a signaling policy that is persuasive for a myopic agent with respect to all the priors in the confidence region, while optimism in principal's Q-function estimation encourages exploration\cite{wu2022sequential}.  
 It perform poorly with far-sighted drivers who anticipate future congestion and signal disclosures.
\item History based online persuasion: 
In traffic routing, the lead vehicle learns a decision tree mapping  histories to signals, anticipating drivers’ future reactions. The IC constraint  requires that at every node in this tree, the agent with full recall — knowing exactly which path led to the current node — still finds it rational to follow the history based signaling recommendation rather than deviate \cite{bernasconi2022sequential}. This approach assumes that model assumptions except the agent's prior belief of the external state are known, and the agent needs to retain the history of past interaction, making it slow to learn, adapt, and plan ahead.
\item Uniform signaling: Here the principal sends signals that are independent of the realized state of nature. As a result, every signal induces the same posterior belief for the agent, equal to the prior. This means the principal does not exploit informational asymmetries and exerts no influence on the agent’s action through belief manipulation. Uniform signaling is often used as a baseline \cite{lingenbrink2018optimal}.
\item Q learning: In Bayesian persuasion, the principal and the agent can use Q-learning independently to maximize their own long-term rewards under repeated interactions
\item Full Revelation: Full-revelation signaling discloses complete traffic state information (e.g., exact congestion levels on all routes)\cite{lingenbrink2018optimal}.
\end{enumerate}
Here, the proposed structured learning yields better average rewards considering the farsightedness of the agent and principal, as illustrated in the Table 2. The faster and enhanced learning of the proposed method,  which takes into account the structure of the agent's policy and principal's value function, compared with the existing online reinforcement learning for signaling strategy, is shown in Fig 6 and 7. 
\begin{table}
\caption{Comparison of the travelling reward of connected vehicle for existing signaling strategies }
\label{Table3}
\scalebox{0.7}{\begin{tabular}
{|p{0.5cm}|p{4.6cm}|p{3cm}|p{3cm}|}
\hline
Item& signaling Strategies & Connected vehicle's average reward & Lead vehicle's average reward \\[0.2 cm]
  \hline
1&OP4&230&420\\[0.2 cm]
\hline
 2&History based online persuasion&218&618\\[0.2 cm]
 \hline
2&Uniform  signaling&251&538\\[0.2 cm]
\hline
3&Q learning &242& 294\\[0.2 cm]
 \hline
4&Fully revealed signaling&247&527\\[0.2 cm]
\hline
5 & \textbf{Structured learning} &\textbf{257}&\textbf{756}\\[0.2 cm]
\hline
\end{tabular}}
\end{table}
\subsection{Computational efficiency as compared to existing methodology}
Fig. \ref{fig:comp_time} illustrates the computational efficiency of the proposed approach with the existing methodology in terms of the computational time. The existing methodology scales with an increase in the dimension of state and external state space, whereas for the proposed methodology, the agent's policy learning remains independent of the increase in dimensionality of state space. By incorporating
prior knowledge of the threshold structure of the agent’s policy and the principal’s Q-values, it improves learning efficiency while reducing computational complexity.
\begin{figure}[htbp]
\centering
\includegraphics[width=0.7\linewidth]{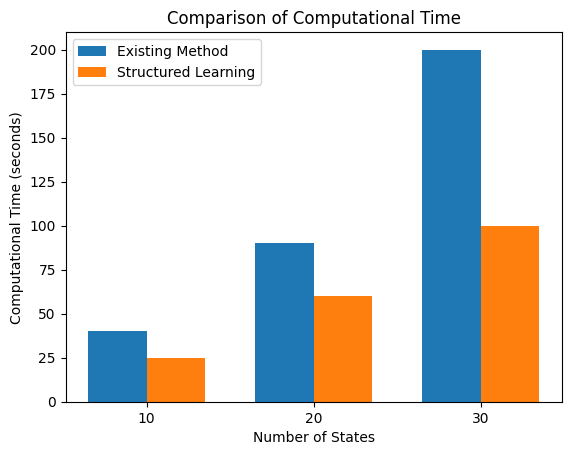}
\caption{Comparison of computational time of existing and structured learning methods}
\label{fig:comp_time}
\end{figure}
\vspace{-2em}
\section{Conclusion}\label{Conclusion} 
Motivated by the efficiency of Bayesian persuasion for intelligent interactive driving, 
this paper proposes an online structured reinforcement learning framework for design of computationally efficient signaling strategy, which is persuasive for a far-sighted agent.
The main contributions are:
\begin{inparaenum} [(i)]
\item MAPL- online structured policy learning algorithm which utilizes the  monotonic structure of  agent  policy for faster learning in  large  state and signal spaces,
\item  Identification of sufficient condition on the Bayesian persuasion model to ensure the supermodular structure of the Q function of the principal for a monotonic agent, 
\item Identification of sufficient conditions for a persuasive signaling strategy for a far-sighted agent
\item Supermodular Q learning of Principal (SQP) for the design of a persuasive signaling strategy  considering a monotonic agent,
\item Case study of Bayesian persuasive driving, which illustrates the practicality of the structural results in addressing  Bayesian persuasion for dynamic vehicle routing.
\end{inparaenum}
By incorporating prior knowledge of the threshold structure of the agent's policy and the principal's Q-values, it improves learning efficiency while reducing computational complexity. This approach is especially effective in highly dynamic real-world applications such as traffic routing, sensor scheduling, inventory management, and transmission scheduling, where the proposed model fits naturally.
The current work considers the homogeneous and monotonic driving behaviour of connected vehicles. However, incorporating the risk-sensitive nature of connected vehicles remains a promising direction for future research.
\vspace{-1em}
\section*{Appendix:}
\subsection{\textbf{Basic definitions}}\label{Definitions}
The following definitions are required for the proof of the main results. 
\begin{definition}(Supermodularity):
The function $r(s,a)$ is said to be supermodular if 
\begin{align}\label{submodular}
r(s,\bar{a})-r(s,{a}) \leq r(\bar{s},\bar{a})-r(\bar{s},{a}) 
\end{align}
where $\bar{s} \geq s$ and $\bar{a} \geq a , \forall s, \bar{s}\in \mathcal{S}, a, \bar{a} \in \mathcal{A}$. In other words, $r(s, \bar{a})-r(s, a)$ is increasing with respect to $s$. 
\end{definition}
 \begin{definition} (Tail sum supermodularity): The probability mass function, pmf,  $P(s,s', a)$, is tail sum supermodular in $(s, a)$ if 
 \begin{align}\label{tail sum}
\sum_{\bar{s}}^S((P(s,s',a+1)- P(s,s',a)) \text{ is increasing in }s  
\end{align}
for $ \ \forall s, \bar{s}\in \mathcal{S},a \in \{0,A\}$
 \end{definition}
\begin{definition} (Stochastic dominance): Let $p_{1}$ and $p_{2}$
be two probability mass functions, then $p_{1}$ is said to first-order stochastically dominate $p_{2}$, denoted as 
$p_{1} \geq_{s}  p_{2}$ if
\begin{align}\label{stoch_dom}
\sum_{\bar{s}}^{S} p_{1}(s) \geq \sum_{\bar{s}}^{S} p_{2}(s)
\end{align}
\end{definition}
\begin{definition} (Monotone Likelihood Ratio (MLR) ordering):
Let $p_{1}$ and $p_{2}$
be two probability mass functions, then $p_{1}$ is said to dominate $p_{2}$ with respect to the MLR order, denoted as  
$p_{1} \geq_{r} p_{2}$ if
\begin{align}\label{MLR}
p_{1}(g_1) \cdot p_{2}(g_2) \leq p_{2}(g_1) \cdot p_{1}(g_2), g_1 \leq g_2, \  \forall g_1,g_2 \in \mathcal{G}
\end{align}
\end{definition}
\begin{theorem}\label{Theorem 2} 
   For any increasing function of $f(.)$. $p_1 \geq_{s} p_2$ iff $\sum_{ s \in \mathcal{S}} f(s) \cdot p_1(s)ds \geq \sum_{ s \in \mathcal{S}} f(s) \cdot p_2(s)ds$ \textnormal{\cite{krishnamurthy2016partially}}.
 \end{theorem}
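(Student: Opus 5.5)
This is the classical characterization of first-order stochastic dominance by expectations of increasing functions, and I will prove it by summation by parts (Abel summation) over the totally ordered finite set $\mathcal{S}=\{0,1,\dots,S\}$. For a pmf $p$ write the tail sums $\bar P(k)=\sum_{s=k}^{S} p(s)$, with $\bar P(0)=1$ and $\bar P(S+1)=0$, so that $p(s)=\bar P(s)-\bar P(s+1)$. By the definition in (\ref{stoch_dom}), $p_1\geq_s p_2$ means $\bar P_1(k)\geq \bar P_2(k)$ for every $k$. The ``$ds$'' in the statement is read as the counting measure, so both sides are finite sums.

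\emph{Sufficiency.} Suppose $p_1\geq_s p_2$ and let $f$ be increasing. Rearranging the sum by Abel summation gives
\begin{align*}
\sum_{s=0}^{S} f(s)\,p(s) = f(0) + \sum_{k=1}^{S}\bigl(f(k)-f(k-1)\bigr)\,\bar P(k).
\end{align*}
Subtract this identity for $p_2$ from the identity for $p_1$. The $f(0)$ terms cancel, since both tails at $0$ equal $1$. Every remaining term is the product of $f(k)-f(k-1)\ge 0$ (monotonicity of $f$) and $\bar P_1(k)-\bar P_2(k)\ge 0$ (dominance). Hence the difference is nonnegative, which is the claimed inequality.

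\emph{Necessity.} Suppose the inequality holds for every increasing $f$. For each $k$, take the indicator $f_k(s)=\mathbf{1}\{s\geq k\}$, which is increasing. Then $\sum_s f_k(s)p_i(s)=\bar P_i(k)$, so the hypothesis gives $\bar P_1(k)\ge\bar P_2(k)$ for all $k$, which is exactly $p_1\geq_s p_2$.

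The only step that needs real care is the summation-by-parts identity: the boundary terms must be handled correctly, using $\bar P(0)=1$ and $\bar P(S+1)=0$. Once that identity is in place, both directions follow immediately. If the paper's ``increasing'' means non-strict, the indicator functions are admissible test functions, so no approximation argument is needed.
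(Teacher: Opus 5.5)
Your proof is correct. The Abel-summation identity, with the boundary values $\bar P(0)=1$ and $\bar P(S+1)=0$, gives sufficiency, and the indicator test functions $\mathbf{1}\{s\ge k\}$ give necessity. You also rightly read the statement as quantifying over all increasing $f$, since for a single fixed $f$ the converse direction would be false. The paper gives no proof of its own and simply cites \cite{krishnamurthy2016partially}; your argument is the standard proof of this classical characterization, which is what that citation invokes.
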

 \begin{theorem}\label{Theorem 3}
  For any increasing function $f(.)$, the $P(s,s', a)$ is tail sum supermodular iff $\sum_{ s' \in \mathcal{S}} P(s,s', a)\cdot f(s')$ is supermodular in $(s, a)$ \textnormal{\cite{krishnamurthy2016partially}}.
 \end{theorem}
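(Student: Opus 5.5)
The plan is to reduce both directions to a summation-by-parts identity that rewrites $\sum_{s'} P(s,s',a) f(s')$ in terms of the tail sums of $P(s,\cdot,a)$. This mirrors the proof of Theorem~\ref{Theorem 2}, with one added difference in the action.

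\textbf{Abel summation.} For $k \in \{1,\dots,S\}$ define the tail sum $T(s,k,a) = \sum_{s'=k}^{S} P(s,s',a)$ and the increment $\Delta f_k = f(k)-f(k-1)$. Since $f$ is increasing, $\Delta f_k \ge 0$. Writing $f(s') = f(0) + \sum_{k=1}^{s'} \Delta f_k$, interchanging the order of summation, and using $\sum_{s'} P(s,s',a)=1$ gives
\begin{align*}
\sum_{s' \in \mathcal{S}} P(s,s',a) f(s') = f(0) + \sum_{k=1}^{S} \Delta f_k \, T(s,k,a).
\end{align*}
Taking the difference between consecutive actions, the constant $f(0)$ cancels:
\begin{align*}
D(s,a) := \sum_{s'} \bigl(P(s,s',a+1)-P(s,s',a)\bigr) f(s') = \sum_{k=1}^{S} \Delta f_k \bigl(T(s,k,a+1)-T(s,k,a)\bigr).
\end{align*}

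\textbf{Sufficiency.} Assume $P$ is tail sum supermodular in the sense of (\ref{tail sum}). Then each bracket $T(s,k,a+1)-T(s,k,a)$ is increasing in $s$. Because the weights $\Delta f_k$ are nonnegative, $D(s,a)$ is a nonnegative combination of increasing functions of $s$, so it is increasing in $s$. This is exactly the defining inequality (\ref{submodular}) for the pair of actions $a, a+1$. For general actions $\bar a \ge a$, telescope $D(s,a)+D(s,a+1)+\dots+D(s,\bar a-1)$; each summand is increasing in $s$, so the sum is too. Hence $\sum_{s'} P(s,s',a) f(s')$ is supermodular in $(s,a)$.

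\textbf{Necessity.} Assume $\sum_{s'} P(s,s',a) f(s')$ is supermodular in $(s,a)$ for every increasing $f$. For a fixed threshold $\bar s$, take the increasing step function $f(s') = \mathbf{1}\{s' \ge \bar s\}$. Then $\sum_{s'} P(s,s',a) f(s') = T(s,\bar s,a)$, and supermodularity applied to the consecutive actions $a, a+1$ says precisely that $T(s,\bar s,a+1)-T(s,\bar s,a)$ is increasing in $s$. Since $\bar s$ was arbitrary, this is tail sum supermodularity. The case $\bar s = 0$ is trivial, because the tail sum equals $1$ and the difference is $0$.

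\textbf{Main obstacle.} There is no real analytic difficulty here; the work is bookkeeping. The points to handle carefully are:
\begin{itemize}
\item the index conventions in the definitions (the lower limit $\bar s$ of the tail sum and the range of actions), so that the step functions in the necessity direction are genuinely admissible increasing functions;
\item the summation-by-parts boundary term $f(0)$, which must cancel in the action difference.
\end{itemize}
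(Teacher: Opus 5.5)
Your proof is correct. The paper gives no argument of its own for this statement; it is simply cited from \cite{krishnamurthy2016partially}, so your Abel-summation proof is a self-contained substitute rather than a match to anything in the text.

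You also read the quantifier correctly. The right-hand side must hold for \emph{every} increasing $f$: for a single fixed $f$ (a constant, say) the sum is trivially supermodular and the converse fails.

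A shorter route, in keeping with how the paper pairs this result with Theorem~\ref{Theorem 2}, reduces directly to that theorem. For $\bar s \ge s$, tail-sum supermodularity at the actions $a,a+1$ says exactly that the mixture $\tfrac12 P(\bar s,\cdot,a+1)+\tfrac12 P(s,\cdot,a)$ first-order dominates $\tfrac12 P(s,\cdot,a+1)+\tfrac12 P(\bar s,\cdot,a)$. Theorem~\ref{Theorem 2} then converts this, in both directions at once, into the supermodularity inequality for consecutive actions; you finish by telescoping in $a$ as you already do.

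Your summation-by-parts identity is essentially the proof of Theorem~\ref{Theorem 2} carried out inline. It costs a few more lines, but it makes the nonnegative weights $\Delta f_k$ and the step-function test in the necessity direction explicit.

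On the obstacle you flag: the paper uses ``increasing'' in the weak sense, so $\mathbf{1}\{s'\ge \bar s\}$ is admissible. If strictly increasing $f$ were required, apply the hypothesis to $\mathbf{1}\{s'\ge\bar s\}+\epsilon s'$ and let $\epsilon\to 0$. This works because the supermodularity inequality is linear in $f$ and weak inequalities survive the limit.
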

\begin{lemma}\label{lemma_avg}
Consider $\gamma_k$ be an increasing sequence of discount factors $\{\gamma_k\}$ such that $\gamma_k  \rightarrow 1$, and let $\{\rho_{\gamma_k}\}$ be the corresponding  stationary discounted optimal policies. Then there exists a subsequence $\{\zeta_{k}\}$ of $\gamma_k$ and a stationary policy $\rho^*$ which is a limit point of $\{\rho_{\zeta_k}\}$. That is, for every  $(s,g)$, there exists an integer $N(s,g)$ such that for all $n \ge N(s,g), \rho_{\zeta_k}(s,g) =~\rho^*(s,g)$.
\end{lemma}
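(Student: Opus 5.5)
This is the classical finite-MDP result that the Blackwell limit of discount-optimal policies is attained eventually along a subsequence. The plan uses only finiteness and a diagonal extraction.

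\textbf{Step 1: compactness of the policy space.} Because $\mathcal{S}$, $\mathcal{G}$ and $\mathcal{A}$ are finite, the set of stationary deterministic policies $\rho:\mathcal{S}\times\mathcal{G}\to\mathcal{A}$ is finite, with at most $A^{(S+1)G}$ elements. For each $k$, fix a stationary discounted-optimal policy $\rho_{\gamma_k}$. Such a policy exists for every $\gamma_k<1$, since the discounted analogue of the agent's Bellman equation (\ref{Q_contin_b}), with the factor $\gamma_k$ in front of the continuation term, is a contraction on a finite space. Take the maximizer in the analogue of (\ref{opt_agent_policy}), breaking ties by a fixed rule.

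\textbf{Step 2: pigeonhole extraction.} The sequence $\{\rho_{\gamma_k}\}$ takes values in a finite set, so some policy $\rho^*$ equals $\rho_{\gamma_k}$ for infinitely many indices $k$. Call these indices $k_1<k_2<\cdots$ and set $\zeta_n=\gamma_{k_n}$. Then $\{\zeta_n\}$ is a subsequence of $\{\gamma_k\}$ that is still increasing with $\zeta_n\to 1$, and $\rho_{\zeta_n}=\rho^*$ for every $n$. The conclusion therefore holds with $N(s,g)=1$ for every $(s,g)$, which is stronger than required.

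\textbf{Alternative for a coordinatewise version.} If one prefers not to treat the policy as a single object, one can extract coordinatewise. Enumerate the finitely many pairs $(s,g)$. For the first pair, pass to a subsequence on which $\rho_{\gamma_k}(s,g)$ is constant, which is possible because $\mathcal{A}$ is finite. Refine that subsequence for the next pair, and continue. After finitely many refinements, each coordinate is eventually constant, and that constant defines $\rho^*(s,g)$. Finiteness of $\mathcal{S}\times\mathcal{G}$ means no genuine diagonal argument is needed.

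\textbf{Main obstacle.} The combinatorial extraction is trivial. The only point needing care is that ``limit point'' be understood in the discrete topology, so that convergence means eventual equality. It also matters that the policies are chosen deterministically, through the tie-breaking rule, so that the sequence really lives in a finite set.

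If the lemma is meant to also say that $\rho^*$ is average-reward optimal, I would add one more step. First, invoke the ergodicity assumption stated after (\ref{agent_exp_cum_cost}). Second, use the Laurent expansion $V_{\gamma}=\frac{\tilde R_A}{1-\gamma}+h+o(1)$. Multiplying the discounted optimality of $\rho^*$ along $\zeta_n$ by $(1-\zeta_n)$ and letting $n\to\infty$ then gives average optimality. That expansion is the genuinely delicate part, but the stated lemma does not require it.
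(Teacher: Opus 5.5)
Your proof is correct. The paper does not prove this lemma itself; it imports it from \cite{sennott1989average}, so the relevant comparison is with Sennott's argument. That argument is essentially the coordinatewise diagonal extraction you give as your alternative. Its countable-state origin explains why the statement carries a state-dependent index $N(s,g)$. With a countably infinite state space, the policy set $\mathcal{A}^{\mathcal{S}\times\mathcal{G}}$ is uncountable, so pigeonholing whole policies fails and one only gets eventual agreement coordinate by coordinate. In the paper's model, $\mathcal{S}\times\mathcal{G}$ is finite, so your pigeonhole step is legitimate and gives the stronger uniform conclusion $N(s,g)=1$. The diagonal route buys nothing extra here beyond robustness to a countably infinite state--signal space. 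Your closing remark on average optimality is outside the lemma. The paper gets that step from Sennott's conditions (C1)--(C4) rather than from a Laurent expansion, and for a finite model either route works.
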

For the Bayesian persuasion model we assume :
\begin{itemize}
\item [(C1)] Markov chain induced by agent's policy is ergodic. 
\item[(C2)] For every $s \in S, g \in G$ and discount factor $\gamma$ , discounted value function, $V ^{\gamma}_{\rho}(s,g) < \infty$
\item[(C3)] Assuming a reference state $(s_0,g_0)$, there exists a nonnegative $L$ such that \\ $-L \leq h^{\gamma}(s,g)\stackrel{\triangle}{=}  V_A^{\gamma}(s,g)- V_A^\gamma (s_0,g_0) \  \forall (s,g), \gamma$.
 \item[(C4)] There exists a $M(s,g)$ such that  $h^{\gamma}(s,g) \leq M(s,g)$ for every $(s,g)$ and $\gamma$ and an action $a_0$ such that $\sum_{s'\in \mathcal{S}} P(s,s',a_0) \cdot M(s',.)< \infty$.
 \\ Also $\sum_{s' \in \mathcal{S}}P(s,s',a) \cdot M(s',.) < \infty$ for all $(s,g), a$.
\end{itemize}

Under (C1), (C2), (C3), (C4), any policy identified as a limit point in Lemma \ref{lemma_avg}, qualifies as an average reward optimal policy \cite{sennott1989average}. This is established in the following as :
\begin{lemma}
Any stationary deterministic policy $\rho^*$ identified through Lemma \ref{lemma_avg} is average reward optimal. More precisely, there exists a constant $R_A^*= lim_{\gamma \rightarrow 1} (1-\gamma)V_A^{\gamma}(s,g)$ for every $(s,g)$  and $V_A(s,g) $\ with \ $-L \leq V_A(s,g) \leq M(s,g),$ 
such that the Bellman optimality equation
\begin{small}
\begin{align}\label{Bel_cont}
&R_A^* + V_A(s,g) = \max_{a \in A} \biggl[ \tilde{r}_A(s, a) +\\ &\sum_{s' \in \mathcal{S}} P(s' \mid s, a) \sum_{g \in \mathcal{G}} \sum_{\theta \in \Theta}  \MU{s'} { \theta' } \cdot \PI{s'}{\theta'}{g'} \cdot V_A(s',g') \biggl]  \notag
\end{align}
\end{small}
is satisfied where the average reward under the optimal policy $\rho^*$ is  $R_A^*$.
\end{lemma}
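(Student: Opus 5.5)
The plan is the standard vanishing-discount argument of Sennott. Fix an increasing sequence $\gamma_k \to 1$, let $V_A^{\gamma_k}$ be the discounted optimal value functions, and take the subsequence $\zeta_k$ and the limit policy $\rho^*$ supplied by Lemma~\ref{lemma_avg}. We work with the relative values $h^{\gamma}(s,g)=V_A^{\gamma}(s,g)-V_A^{\gamma}(s_0,g_0)$. The discounted Bellman equation for the Bayesian persuasion MDP on the augmented state $(s,g)$ has the transition kernel $P(s'\mid s,a)\sum_{\theta'}\mu_{s'}(\theta')\pi_{s',\theta'}(g')$ and the reward $\tilde{r}_A(s,g,a)$. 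Subtracting $\gamma V_A^{\gamma}(s_0,g_0)$ from both sides gives
\begin{align*}
(1-\gamma)V_A^{\gamma}(s_0,g_0) + h^{\gamma}(s,g) = \max_{a}\Bigl[\tilde{r}_A(s,g,a) + \gamma \sum_{s'} P(s'\mid s,a)\sum_{g'}\sum_{\theta'}\mu_{s'}(\theta')\pi_{s',\theta'}(g')\, h^{\gamma}(s',g')\Bigr].
\end{align*}

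\textbf{Step 1: boundedness.} By (C2) the quantity $(1-\gamma)V_A^{\gamma}(s_0,g_0)$ is finite. It is also bounded uniformly in $\gamma$, because $\tilde{r}_A$ is a convex combination of finitely many values of $R_A$. By (C3) and (C4), $-L\le h^{\gamma}(s,g)\le M(s,g)$ for all $\gamma$. Since the augmented state space $\mathcal{S}\times\mathcal{G}$ is countable (here finite), a diagonal argument lets us refine $\zeta_k$ further so that three things converge simultaneously: $(1-\zeta_k)V_A^{\zeta_k}(s_0,g_0)\to R_A^*$, $h^{\zeta_k}(s,g)\to V_A(s,g)$ pointwise, and $\rho_{\zeta_k}$ coincides eventually with $\rho^*$. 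The bounds $-L\le V_A\le M$ pass to the limit. Ergodicity (C1) then shows that $(1-\gamma)V_A^{\gamma}(s,g)$ has the same limit for every $(s,g)$, since the differences are absorbed by the bounded $h^{\gamma}$ times $(1-\gamma)$. This identifies $R_A^*$ as stated.

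\textbf{Step 2: passing to the limit.} We take limits in the displayed equation. The main obstacle is interchanging the limit with the expectation over $(s',g')$ and with the $\max$. When the spaces are finite this is immediate. In general we dominate $h^{\gamma}$ by $L+M(s',g')$, which is integrable by (C4), and apply dominated convergence. For the $\max$ we use the fact that $\rho_{\zeta_k}(s,g)=\rho^*(s,g)$ for large $k$: this gives equality at the action $\rho^*(s,g)$ and the inequality $\ge$ for every other action. Together these yield the Bellman optimality equation (\ref{Bel_cont}) with $\rho^*$ attaining the maximum. Should dominated convergence only give an inequality, as in Sennott's infinite-state case, Fatou's lemma delivers the average-reward optimality inequality, and this suffices for Step 3.

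\textbf{Step 3: optimality.} Iterate the (in)equality along an arbitrary policy $\rho$ for $T$ steps, divide by $T$, and use the bounds on $V_A$ so that the terminal term vanishes. This shows that the long-run average reward of every policy is at most $R_A^*$. Along $\rho^*$ equality holds, so $\rho^*$ is average reward optimal.
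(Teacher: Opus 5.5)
Your route is the right one. The paper gives no argument of its own for this lemma; it just invokes Sennott's vanishing-discount theorem \cite{sennott1989average}, which is what you reconstruct. For the finite spaces of this model your Steps 2 and 3 are sound. Two points still need repair.

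\textbf{The full limit.} The lemma asserts $R_A^*=\lim_{\gamma\to1}(1-\gamma)V_A^{\gamma}(s,g)$, but Step 1 only gives convergence along the refined subsequence $\zeta_k$. So ``this identifies $R_A^*$ as stated'' does not yet follow. You can close this with your own Step 3:
\begin{enumerate}[label=(\roman*)]
\item Take an arbitrary sequence $\gamma_k\to1$. Finiteness of the policy set (as in Lemma~\ref{lemma_avg}) and boundedness of $h^{\gamma}$ give a further subsequence with a limit policy and a limit relative value.
\item Step 2 along that subsequence yields a solution of the optimality equation whose constant is the corresponding subsequential limit of $(1-\gamma)V_A^{\gamma}(s_0,g_0)$.
\item Step 3 shows that the constant in any such solution with bounded $V_A$ equals the optimal average reward.
\end{enumerate}
Hence all subsequential limits of this bounded family coincide, and the full limit exists.

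\textbf{The Fatou fallback.} Drop it, because it points the wrong way in a maximization problem. Here the uniform bound is on the lower side, $h^{\gamma}\ge -L$. Fatou then gives $R_A^*+V_A(s,g)\ge \tilde{r}_A(s,g,a)+\sum_{s',g'}P\,V_A$ for every $a$, including $a=\rho^*(s,g)$. Iterating this only bounds every policy's average reward above by $R_A^*$. It says nothing about $\rho^*$ attaining $R_A^*$, so ``along $\rho^*$ equality holds'' would be lost. In Sennott's cost formulation the same Fatou inequality is the useful one only because of the sign flip; (C3)--(C4) keep the cost-side orientation.

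You do not need the fallback anyway. The second clause of (C4), or simply finiteness, gives dominated convergence and hence equality in Step 2. Since $V_A$ is bounded on the finite space, the terminal term $\mathbb{E}_{\rho^*}[V_A(s_T,g_T)]/T$ vanishes, and Step 3 goes through.
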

\vspace{-1.8em}
\subsection{\textbf{Discussion}}
(C1) ensures a unique stationary distribution and hence the average payoff does not depend on the initial state. (C2) requires that the optimal discounted reward is finite for every state and discount factor. (C3) assumes a reference state (e.g., $(s_0,g_0)$) and a nonnegative function $V_A(s,g)$ uniformly bounding the expected reward to reach it. (C4) states that for each state, there is an action keeping the expected next-stage value within a nonnegative bound $M(s,g)$. It also ensures that one-step transitions are uniformly bounded across all states and actions. These are largely observed in transmission scheduling, traffic routing and queuing models \cite{ngo2009monotonicity}.
\vspace{-1.2em}
\subsection{\textbf{Monotone best responses and incentive compatibility}}
\label{app:Monotone best response}
We need to prove that under sufficient set of conditions, for  MLR signaling strategy $\pi$, the monotone policy $a^*(\cdot,\cdot)$ from Theorem~\ref{Theorem agent} is incentive compatible: for every $s,g$ and every deviation from monotone action, $a' \in \mathcal{A}$,
\[
Q_A(s,g,a^*(s,g)) \ge Q_A(s,g,a').
\]
The proof proceeds with the following steps:
\subsubsection{ Step 1: Posterior belief ordering from MLR}
 \begin{lemma}\label{lemma1}
  If $\pi$ satisfies MLR ordering w.r.t~$g$ then \\ \(\bel{s}{g_i}{\theta} <_r \bel{s}{g_{j}}{\theta}, \  \forall g_i \leq g_j\)
  \end{lemma}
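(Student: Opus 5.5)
The plan is to work directly from the Bayes update (\ref{eq:belief_update}) and check the MLR inequality (\ref{MLR}) for the two posteriors. I interpret ``$\pi$ satisfies MLR ordering w.r.t.\ $g$'' in the standard sense: for fixed $s$, the likelihood $\PI{s}{\theta}{g}$ is TP2 in $(\theta,g)$. Concretely, for $\theta_1\le\theta_2$ and $g_i\le g_j$,
\begin{align*}
\PI{s}{\theta_1}{g_j}\,\PI{s}{\theta_2}{g_i}\le \PI{s}{\theta_1}{g_i}\,\PI{s}{\theta_2}{g_j},
\end{align*}
equivalently $\pi_{s,\theta_1}(\cdot)\le_r \pi_{s,\theta_2}(\cdot)$ as distributions over signals. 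This is exactly the ordering that Lemma~\ref{lem:softmax-mlr} (the softmax lemma) produces from supermodular $Q_P$.

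\textbf{Step 1: cancel the normalisers.} Fix $s$, $g_i\le g_j$ and $\theta_1\le\theta_2$. Write $Z_g=\sum_{\theta}\PI{s}{\theta}{g}\MU{s}{\theta}$. The MLR condition to be verified (with the ordering chosen so that $b_{s,g_j}$ dominates) is
\begin{align*}
\bel{s}{g_j}{\theta_1}\,\bel{s}{g_i}{\theta_2}\le \bel{s}{g_i}{\theta_1}\,\bel{s}{g_j}{\theta_2}.
\end{align*}
Substituting (\ref{eq:belief_update}), each side carries the common factor $1/(Z_{g_i}Z_{g_j})>0$ and the common factor $\MU{s}{\theta_1}\MU{s}{\theta_2}\ge 0$.

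\textbf{Step 2: reduce to the TP2 assumption.} After cancelling these factors, the inequality is exactly the TP2 inequality on $\pi$ above, so it holds.

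Degenerate cases need separate treatment:
\begin{itemize}
\item If $\MU{s}{\theta_1}\MU{s}{\theta_2}=0$, both sides vanish and the inequality is trivially true.
\item If $Z_g=0$, the signal $g$ is never sent in state $s$ and its posterior is undefined. Such signals should be excluded, or assigned the prior by convention, noting that the ordering then only needs to hold over signals that are actually sent.
\end{itemize}

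\textbf{Main obstacle.} The real issue is the notational and directional mismatch in the paper's definition (\ref{MLR}): its inequality direction is reversed relative to the usual convention, and it is stated over $\mathcal{G}$ rather than $\Theta$. I will fix the convention explicitly so that ``higher signal $\Rightarrow$ MLR-higher belief'' holds, and I will state the TP2 hypothesis on $\pi$ in matching form. Once this is done, the proof is the two-line cancellation above. Strict inequality $<_r$ follows only if the TP2 inequality is strict for some pair with positive prior mass; otherwise I will claim $\le_r$, which is all that is needed downstream.
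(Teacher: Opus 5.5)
Your proposal is correct and follows the paper's own argument: substitute the Bayes update, cancel the common normalisers and prior factors, and the posterior MLR inequality reduces to the TP2 cross-product condition on $\pi$. The paper phrases this with ratios of posteriors; your cross-multiplied form avoids zero denominators, and you are right that only $\le_r$, not a strict ordering, follows.

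One correction to your side remark: the paper's MLR definition is not reversed. The condition $p_1(g_1)p_2(g_2)\le p_2(g_1)p_1(g_2)$ for $g_1\le g_2$ says $p_1/p_2$ is increasing, which is the standard convention; the only mismatch is that it is written over $\mathcal{G}$ rather than $\Theta$.
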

 \begin{proof}
 \textnormal{  MLR ordering of signal distributions implies that posterior belief of external states are ordered in the MLR sense as the signal increases.}
 \begin{small}
\begin{align*}
 \PI{s}{\theta}{g_i}  &\leq_{r} \PI{s}{ \theta} {g_j} \\
\PI{s}{\theta_m}{g_{i}} \cdot \PI{s}{\theta_n}{g_{j}} & \geq \PI{s}{\theta_n}{g_{i}} \cdot \PI{s}{\theta_m}{g_{j}} \\
\textnormal{i.e. \ }\frac{\frac{\PI{s}{\theta_m}{g_{i}}\cdot \MU{s}{\theta_m}}{\sum_{\theta \in \Theta} \PI{s}{\theta}{g_i} \cdot \MU{s} {\theta}}}{\frac{\PI{s}{\theta_n}{g_{i}}\cdot \MU{s}{\theta_n}}{\sum_{\theta \in \Theta} \PI{s}{\theta}{g_i} \cdot \MU{s}{\theta} }}
 &\geq
 \frac{\frac{\PI{s}{\theta_m}{g_{j}}\cdot \MU{s}{\theta_m}}{\sum_{\theta \in \Theta} \PI{s}{\theta}{g_{j}} \cdot \mu_s (\theta) }}{\frac{\PI{s}{\theta_n}{g_{j}}\cdot \MU{s}{\theta_n}}{\sum_{\theta \in \Theta} \PI{s}{\theta}{g_{j}} \cdot \MU{s}{\theta} }} \\
  \frac{b_{s,g_{i}}(\theta_m)}{b_{s,g_{i}}(\theta_n)}
\geq
\frac{b_{s,g_{j}}(\theta_m)}{b_{s,g_{j}}(\theta_n)} , &
 \textnormal{i.e.}, \  \bel{s}{g_i}{\theta} <_r \bel{s}{g_{j}}{\theta} , \  \forall g_i\leq g_j
\end{align*}
\end{small}
\end{proof}
\vspace{-2em}
\subsubsection{Step 2: Ordering of agent's value function}
\vspace{-0.4em}
\begin{lemma}\label{lemma:monotone_sigma}
When 
$\pi$ satisfies MLR ordering with respect to ~$g$ then the normalized measure of Bayesian update,
 $\sigma'(s)$ is first-order stochastically increasing in $s$, i.e.,
\[
\sigma'(s_i) \le_s \sigma'(s_j), \quad \forall s_i \le s_j.
\]
\end{lemma}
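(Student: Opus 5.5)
Throughout, I read the ``normalized measure of Bayesian update'' $\sigma'(s)$ as the normalizing constant in (\ref{eq:belief_update}), regarded as a pmf over signals:
\[
\sigma'(s)(g)=\sum_{\theta\in\Theta}\PI{s}{\theta}{g}\,\MU{s}{\theta}.
\]
This is the predictive distribution of the signal in state $s$. By the definition of stochastic dominance in (\ref{stoch_dom}), the claim is that for every threshold $\bar g\in\mathcal{G}$ the tail sum
\[
T(s,\bar g)=\sum_{g\ge \bar g}\sigma'(s)(g)=\sum_{\theta\in\Theta}\MU{s}{\theta}\,F(s,\theta,\bar g),\qquad F(s,\theta,\bar g)=\sum_{g\ge\bar g}\PI{s}{\theta}{g},
\]
is increasing in $s$. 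My plan is to reduce this to Theorem~\ref{Theorem 2} applied twice: once in the external state $\theta$ and once in the agent state $s$.

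\emph{Step 1 (signal tails increase in $\theta$).} MLR ordering of $\pi_{s,\theta}(\cdot)$ in $\theta$, in the sense of (\ref{MLR}), implies first-order stochastic dominance, because MLR dominance implies $\ge_s$. Hence, for each fixed $s$ and $\bar g$, the map $\theta\mapsto F(s,\theta,\bar g)$ is increasing.

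\emph{Step 2 (prior ordering in $s$).} By (A3), $\mu_{s_i}\le_r\mu_{s_j}$ for $s_i\le s_j$, so again $\mu_{s_i}\le_s\mu_{s_j}$. Applying Theorem~\ref{Theorem 2} with the increasing function $f(\theta)=F(s_i,\theta,\bar g)$ gives
\[
\sum_\theta \MU{s_i}{\theta}F(s_i,\theta,\bar g)\le\sum_\theta\MU{s_j}{\theta}F(s_i,\theta,\bar g).
\]

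\emph{Step 3 (replace $s_i$ by $s_j$ inside $F$).} The remaining task is to show $F(s_i,\theta,\bar g)\le F(s_j,\theta,\bar g)$ for each $\theta$. In other words, the signaling strategy must itself shift stochastically toward higher signals as the agent state increases. Given that inequality, the chain
\[
T(s_i,\bar g)\le\sum_\theta\MU{s_j}{\theta}F(s_i,\theta,\bar g)\le\sum_\theta\MU{s_j}{\theta}F(s_j,\theta,\bar g)=T(s_j,\bar g)
\]
closes the proof.

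I expect Step 3 to be the main obstacle, because MLR in $\theta$ alone says nothing about how $\pi$ varies with $s$. My first attempt will derive it from the structure of the signaling strategy actually used. By Theorem~\ref{Therm main}, $Q_P(s,\theta,g)$ is supermodular in $(s,g)$. The softmax (\ref{bolt_signaling}) then gives
\[
\frac{\PI{s}{\theta}{g+1}}{\PI{s}{\theta}{g}}=\exp\!\bigl((Q_P(s,\theta,g+1)-Q_P(s,\theta,g))/\tau\bigr),
\]
which is increasing in $s$. So $\pi_{s,\theta}$ is MLR increasing in $s$, hence stochastically increasing, which is exactly what Step 3 needs. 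This mirrors the argument already used for the softmax lemma in the $\theta$ direction. If one prefers not to rely on the softmax construction, I would instead state explicitly that the MLR-ordered strategy is taken jointly in $(s,\theta)$, so that Step 3 holds by hypothesis.
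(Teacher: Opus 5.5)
Your argument is correct and follows the same route as the paper's proof. Both express $\sigma'(s)$ through tail sums over signals, use MLR of $\pi_{s,\theta}(\cdot)$ in $\theta$ to make those tails increasing in $\theta$, and then invoke (A3) with Theorem~\ref{Theorem 2}. Your Step~3 makes explicit a point the paper's proof passes over silently: $\pi_{s,\theta}(\cdot)$ must itself be stochastically increasing in $s$. This is genuinely needed, since the conclusion fails if $\mu_s$ is constant in $s$ while $\pi$ shifts toward lower signals as $s$ grows. As you note, it is supplied either by the softmax lemma or by the joint MLR-in-$(s,\theta)$ assumption the paper makes in the proof of Theorem~\ref{Therm main}.
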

\begin{proof}
Fix any threshold $\bar g \in \mathcal{G}$,
\begin{small}
\begin{align*}
\sum_{\bar g}^{G} \sigma'_s(g)\
&=
\sum_{\bar g}^{G} \sum_{\theta \in \Theta} \PI{s}{ \theta} {g} \,\mu_s(\theta)\
\end{align*}
\end{small}
We have  $\PI{s}{\theta}{g_i}  \leq_{r} \PI{s}{ \theta} {g_j}.$
Since $\mu_s$ is first-order stochastically increasing in $s$, it follows that
\begin{small}
\[
\sum_{\bar g}^{G} \sigma'_{s_i}(g)\
\;\le\;
\sum_{\bar g}^{G} \sigma'_{s_j}(g)\,
\quad \forall s_i \le s_j.
\]
\end{small}
Since the inequality holds for all $\bar g$, we conclude that
$\sigma'(s_i) \le_s \sigma'(s_j)$. This result can be used to order the agent's value function as illustrated below:
\vspace{-0.8em}
\begin{lemma} \label{l3}
Let $V_A(s,g)$ be  a increasing function  w.r.t $s$ and~ $g$, then  $\Tilde{V}_A(s_{i}) \leq \Tilde{V}_A(s_{j})$ where 
\begin{align*}
 \tilde{V}_A(s)=\sum_{g \in \mathcal{G}}\sum_{\theta \in \Theta}\MU{s}{\theta} \cdot \PI{s}{\theta}{g} \cdot \tilde{V}_A(s,g) 
 \end{align*}
 $\sigma'(s_i)\leq_{s} \sigma'(s_{j}), \ \forall s_i \leq s_j$
 \\\textnormal{Since $ \tilde{V}_A (s,g) $  is increasing in $g$ then }
 \begin{small}
 \begin{align}
 \label{sigeq_1}
 &\sum_{g \in \mathcal{G}} \!
 \sigma'(s_{i}, g) \!\cdot V_A(s_i ,g)\!  \leq \!\sum_{g \in \mathcal{G}}\! \sigma'(s_{j},\! g)\! \cdot \! V_A(s_{i} ,g) 
\end{align} 
\begin{align}\label{sigeq_2}
\sum_{g \in \mathcal{G}} \sigma'(s_{i}, g) \cdot V_A(s_{j} ,g)  \leq \sum_{g \in \mathcal{G}} \sigma'(s_{j}, g) \cdot  V_A(s_{j} ,g)
 \end{align}
 \end{small}
 \textnormal{Assume $ V_A(s,g)$ is increasing in $s$ and}  $\sigma'(s_i,g) >0 $ 
 \begin{small}
\begin{align}\label{sigeq_3}
 \sum_{g \in \mathcal{G}} \!\sigma'(s_{i} ,g)\! \cdot V_A(s_i ,g)  \leq \sum_{g \in \mathcal{G}}\! \sigma'(s_{i}, g) \! \cdot \! V_A(s_{j} ,g) 
 \end{align}
 \end{small}
 \textnormal{Thus combining (\ref{sigeq_1}), (\ref{sigeq_2}) and (\ref{sigeq_3})} \textnormal{we have}
 \begin{align*}
 \sum_{g \in \mathcal{G}} \sigma'(s_{i} ,g) \cdot  V_A(s_i ,g)  \leq \sum_{g \in \mathcal{G}} \sigma'(s_{j}, g) \cdot V_A(s_{j} ,g)  
 \end{align*}
 \textnormal{i.e.}, \  $\tilde{V}_A(s_i) \leq \tilde{V}_A(s_{j})$
 \end{lemma}
 \subsubsection{Step 3: Supermodularity of agent's Q value function}
Under (B1) and (B2), (A3) and (A4), we have $ Q_A(s,g,a)$ is supermodular in both $(s,a)$ and $(g,a)$. The proof is direct from \cite{paulmonotonic} with the corresponding change while considering supermodularity in reward structures.
\subsubsection{Step 4: Existence of monotonic policy of agent}
According to Theorem \ref{Theorem agent}, since $Q_A(s,g,a)$ is supermodular in $(s,a)$ and $(g,a)$, there exists an optimal policy that is monotonically increasing in the state and signal space.

\subsubsection{Step 5: IC compatibility of MLR signaling strategy}
The agent's best response :
    $a^*(s,g) = \arg\max_{a \in A} Q_A(s,g,a)$
can be chosen to be monotone in both $s$ and $g$.
Thus, given the MLR signaling strategy $\pi$, following the principal's  recommendation is optimal and action deviation cannot yield a higher expected reward. Thus $Q_A(s,g,a^*(s,g)) \ge Q_A(s,g,a')$ for all action deviations from monotone responses $a'$.
This yields \emph{dynamic incentive compatibility}: for every time $t$ and every realized $(s_t,g_t)$,  $a^*(s_t,g_t)$ is the best response. Hence, the MLR signaling scheme with the corresponding monotone policy is incentive compatible for the agent.
\end{proof}

 \subsection{\textbf{Proof of Theorem \ref{Therm main}}}
 Let $V_{P_0}(s,g)$ be the arbitrary initial value function. Then updated $Q_P $ for the $k+1^{th}$ iteration is given as:
\begin{small}
\begin{align}
&Q_{P_{k+1}}(s,\theta,g)=\sum_{a \in \mathcal{A}} \tilde{\rho}(a|s,g) \cdot R_P(s,\theta,a)+  \notag\\  &\sum_{a \in \mathcal{A}} \tilde{\rho}(a|s,g) \cdot \sum_{s' \in \mathcal{S}} P(s,s',a)  \sum_{\theta' \in \Theta}  \mu_{s'}(\theta')  \cdot V_{P_k}(s', \theta')\notag \notag
\end{align}
\end{small}
In order to prove  that $Q_P(s,\theta,g)$ is supermodular in $(s,g)$ for fixed $\theta$ and also supermodular in $(\theta,g)$ for fixed $s$, using mathematical induction we proceed in two steps:
\subsubsection{To prove value function is monotonically increasing in  $s$ and $\theta$}
\begin{proof}
Let us assume that $V_{P_k}(s,\theta)$ is increasing in $s$ and $\theta$. 
By Theorem 
\ref{Theorem 2},  $\sum_{\theta' \in \Theta}  \mu_{s'}(\theta')  \cdot V_{P_k}(s', \theta') $ is increasing in $s'$.
By (A4), $P (s_i,.,a) \leq_s  P(s_j,.,a), \forall i \leq j$.
Hence,
$\sum_{s' \in \mathcal{S}} P(s,s',a)  \sum_{\theta' \in \Theta}  \mu_{s'}(\theta')  V_{P_k}(s', \theta')$ is increasing in s. \\ 
Also $R_P(s,\theta,a)$ is increasing in $s$, $\theta$ and $a$.
By (A6),
$\tilde{\rho}(a|s_i,g) \leq_s \tilde{\rho}(a|s_j,g), \  \forall i<j, $ 
$\sum_{a \in \mathcal{A}} \tilde{\rho}(a|s,g) \cdot R_P(s,\theta,a) $ is increasing with $s$.\\
Similarly $\sum_{a \in \mathcal{A}} \tilde{\rho}(a|s,g) \cdot R_P(s,\theta,a) $ is increasing with $\theta$. Hence their sum, $Q_{P_k}(s,\theta,g)$ is also increasing in $s$ and $\theta$. 
$ V_{P_{k+1}}(s,\theta)
=\sum_{g \in \mathcal{G}} Q_{P_k}(s,\theta,g) \cdot \pi_k(g|s,\theta)$ \\
Assuming $\pi_k(g| s, \theta) $is MLR with respect to $s$ and $\theta$, %
  $V_{P_{ k+1}}(s, \theta)$ is also increasing in $s$ and $\theta.$ \\
Thus, the mathematical induction is complete.The optimal value function is the fixed point solution of the value iteration algorithm, and hence the optimal value function is monotone and increasing in $s$ and $\theta$ is proved.
\end{proof}
 \subsubsection{To prove that $Q_P(s,\theta,g)$ is supermodular in $(s,g)$ and $(\theta,g)$}
\begin{proof}
We have $\mu_{{s_i}}(\theta)  \leq_{r} \mu_{{s_j}}(\theta) , \   \forall i < j$ and $V_{P_k}(s,\theta,g)$ is increasing in $(s,\theta)$. Hence by Theorem ~\ref{Theorem 2}, \\ $\tilde{V}_{P_k}(s')=\sum_{\theta' \in \Theta} \mu_{s'}(\theta')  \cdot V_{P_k}(s', \theta')$ is increasing in $s'$.
$P(s,.,a)$ is tail sum supermodular in $s$ and $a$.
Hence by Theorem \ref{Theorem 3}, $\sum_{s' \in \mathcal{S}} P(s,s',a) \cdot \tilde{V}_{P_k}(s')$ is supermodular in $(s,a)$. \\
Using (A6) and Theorem 5, we have \\ $\sum_a \tilde{\rho}(a|s,g) \cdot \sum_{s' \in \mathcal{S}} P(s,s',a)  \sum_{\theta' \in \Theta}  \mu_{s'}(\theta') \cdot V_{P_k}(s', \theta')$ is supermodular in $s$ and $g$. \\
By (A2), $R_P(s,\theta,a)$ is supermodular in $(s,a)$ and $(\theta,a)$. Again, we have $\sum_a \tilde{\rho}(a|s,g) \cdot R_P(s,\theta,a)$ is supermodular in $s$ and $g$. 
Similarly, we have $\sum_{a \in \mathcal{A}} \tilde{\rho}(a|s,g) \cdot R_P(s,\theta,a)$ is supermodular in $\theta$ and $g$.\\ Since the sum of supermodular function is supermodular we have,
$Q_{P_{k+1}}(s,\theta,g)=\sum_{a \in \mathcal{A}} \tilde{\rho}(a|s,g) \cdot R_P(s,\theta,a)+ \sum_{a \in \mathcal{A}} \tilde{\rho}(a|s,g) \cdot \sum_{s' \in \mathcal{S}} P(s,s',a)  \sum_{\theta' \in \Theta}  \mu_{s'}(\theta')  \cdot V_{P_k}(s', \theta') $ is supermodular in $(s,\theta)$ and $g$. 
Hence, by mathematical induction, the Q  function of the principal is supermodular in $(s,g)$ and $(\theta,g)$.
\end{proof}
\vspace{-1.7em}
\subsection{\textbf{Discussion of assumptions for supermodular Q function}}
\label{Assumptions}
These assumptions are widely observed in transmission control, sensor scheduling, and traffic routing applications, such as prioritizing transmission in good channels, activating sensors when uncertainty is high \cite{ngo2009monotonicity}, \cite{paulmonotonic}.
 To illustrate when these assumptions are applicable, we present a representative example of lane-based Bayesian persuasive driving detailed in Section \ref{Numerical results}. This involves a principal, i.e., lead vehicle, and an agent, i.e., vehicle connected for selection of low speed and high speed lane. The state refers to the speed of the vehicle, and the external state indicates the extent of congestion-free traffic. 
 
 \begin{itemize}
 \item (A1) ensures that there is an increase in reward of the principal at higher speeds, higher congestion-free traffic, and selection of high-speed lanes. It formalizes the intuition that travel rewards increase with higher vehicle speed (state),  congestion free traffic (external state), and lane selection (action).  Related works \cite{nagarajan2016approximation} and \cite{loui1983optimal} also exploit monotonic reward structures to rank paths in dynamic traffic routing. 
 \item (A2) assures the supermodular reward for the principal, which indicates the increased reward for higher speeds and higher congestion-free traffic at higher actions. It ensures the marginal benefit of taking
higher action at higher states and external states. It plays a key role in numerous applications, including traffic routing and resource allocation problems such as joint replenishment and inventory routing\cite{nagarajan2016approximation}, determining optimal locations for mobile recharging stations in unmanned vehicle networks\cite{shi2023decision}, and tasks like coverage maximization, adaptive ranking, and routing\cite{navidi2020adaptive}. 
\item (A3) indicates the monotonic correlation between state and external state, which captures the correlation between speed and congestion-free traffic.
 \item The higher probability of transitioning to more favourable states from higher states and actions are captured in (A4) and (A5). This tendency to remain at higher speeds as current speed increases— under stochastic dominance over states—is commonly modelled in predictive control frameworks\cite{guo2024efficient},\cite{liu2019vehicle}. Moreover transition model in \cite{prabu2022novel} similarly reflects the higher likelihood of reaching greater speeds associated with different speed lanes.
 \item (A6) ensures that the agent's policy is monotonically increasing with respect to state and signal spaces \cite{paulmonotonic}.
 In other words, higher states and signal more effectively favour higher action, which also helps in faster convergence while considering a monotonic agent.
 \end{itemize}
 \vspace{-1.2em}
\subsection{\textbf{Proof of Lemma 1: Softmax policy is MLR under supermodularity}}
\label{lem:softmax-mlr}
 \begin{proof}
 We have $\pi(g\mid s,\theta)=\frac{\exp\big(Q_P(s,\theta,g)/\tau\big)}{Z(s,\theta)},$
where
$ Z(s,\theta)$ is $\sum_{g' \in \mathcal{G}} \exp\big(Q_P(s,\theta,g')/\tau\big).$
By supermodularity, \\
\begin{small}
$
Q_P(s,\bar g)-Q_P(s,g)\leq Q_P(\bar s,\bar g)-Q_P(\bar s,g),
$
\end{small}
and similarly with $\theta$ replacing $s$. 
We need to prove that $\pi$ is MLR in $(s,g)$ and in $(\theta,g)$.  , i.e. for  any $\bar s\ge s$ and any $g_1\le g_2$, \\
\begin{small}
$\pi(g_1\mid\bar s,\theta)\,\pi(g_2\mid s,\theta)
\le
\pi(g_1\mid s,\theta)\,\pi(g_2\mid\bar s,\theta)$,
\end{small}
and likewise with $\bar\theta\ge\theta$. \\
Fix $\theta$ and let $\bar s\ge s$. For any $g_1\le g_2$, write the two products using the softmax form:
\begin{small}
\[
\pi(g_1\mid\bar s,\theta)\pi(g_2\mid s,\theta)
=\frac{e^{(Q(\bar s,\theta,g_1)+Q(s,\theta,g_2))/\tau}}{Z(\bar s,\theta)Z(s,\theta)},
\]
\[
\pi(g_1\mid s,\theta)\pi(g_2\mid\bar s,\theta)
=\frac{e^{(Q(s,\theta,g_1)+Q(\bar s,\theta,g_2))/\tau}}{Z(s,\theta)Z(\bar s,\theta)}.
\]
\end{small}
The normalizers cancel, so the desired inequality is equivalent to
$
Q(s,\theta,g_2)-Q(s,\theta,g_1)\le Q(\bar s,\theta,g_2)-Q(\bar s,\theta,g_1),$
which is exactly the assumed supermodularity with $\bar g=g_2$ and $g=g_1$. Similarly, with $s$ fixed and $\bar\theta\ge\theta$ proves  MLR  in $(\theta,g)$.
 \end{proof}
 \vspace{-2em}
\subsection{\textbf{Proof of Lemma 2}}
\begin{proof}
The stationary distribution satisfies the balance equation:
$d(s';\sigma) = \sum_{s \in \mathcal{S}} d(s;\sigma) P(s' \mid s; \sigma).$
Taking the gradient with respect to $\sigma$, and then multiplying both sides with $\tilde{V}_A(s')$ and taking the sum we get
\begin{small}
\begin{align*}
\sum_{s \in \mathcal{S}}\nabla_\sigma d(s'; \sigma)\cdot \tilde{V}_A(s') =& \sum_{s \in \mathcal{S}}\sum_{s' \in \mathcal{S}} \nabla_\sigma d(s; \sigma) P(s' \mid s; \sigma) \cdot \tilde{V}_A(s') + \\ & \sum_{s \in \mathcal{S}}\sum_{s' \in \mathcal{S}} d(s; \sigma) \nabla_\sigma P(s' \mid s; \sigma)\cdot \tilde{V}_A(s') \, 
\end{align*}
\end{small}
where
$\tilde{V}_A(s')=\sum_{g' \in \mathcal{G}} \sum_{\theta' \in\Theta}  \MU{s'} { \theta' } \cdot \PI{s'}{\theta'}{g'} \cdot V_A(s',g')$ 
$ \text{We have, }
\tilde{r}_A(s) - \tilde{R}_A = \tilde{V}_A(s) - \sum_{s' \in \mathcal{S}} P(s' \mid s) \tilde{V}_A(s') \,$
\begin{small}
\begin{align*}
\text{ Therefore,}
&\sum_{s \in \mathcal{S}}\nabla d(s) \cdot \tilde{r}_A(s) - \sum_{s \in \mathcal{S}}\nabla d(s) \cdot\tilde{R}_A = \\ & \sum_{s \in \mathcal{S}}\nabla d(s) \cdot \tilde{V}_A(s) - \sum_{s \in \mathcal{S}}\nabla d(s) \cdot\sum P(s' \mid s) \tilde{V}_A(s') \, \\
\text{Also, \ } &\tilde{R}_A(\sigma) = \sum_{s \in  \mathcal{S}} d(s; \sigma) \tilde{r}_A(s) \text{, and hence} \\
\nabla_\sigma  \tilde{R}_A= & \sum_{s \in \mathcal{S}}  d(s; \sigma)\cdot \nabla_\sigma \tilde{r}_A(s)  + \sum_{s \in \mathcal{S}}\sum_{s' \in \mathcal{S}} d(s; \sigma) \nabla_\sigma P(s' \mid s; \sigma)\cdot \tilde{V}_A(s')\notag
\end{align*}
\end{small}
\end{proof} 
\vspace{-2em}
\bibliography{mybib}
\bibliographystyle{unsrt}
\end{document}